\DeclareMathOperator*{\argmax}{arg\,max}
\newtheorem{theorem}{Theorem}[]
\DeclarePairedDelimiterX{\infdivx}[2]{(}{)}{%
  #1\;\delimsize\|\;#2%
}
\title{Sequential Mode Estimation with Oracle Queries}
\author{Dhruti Shah,\textsuperscript{\rm 1} Tuhinangshu Choudhury,\textsuperscript{\rm 1} Nikhil Karamchandani,\textsuperscript{\rm 1} Aditya Gopalan\textsuperscript{\rm 2}\\
\textsuperscript{\rm 1}Indian Institute of Technology, Bombay\\
\textsuperscript{\rm 2}Indian Institute of Science, Bangalore\\
dhruti96shah@gmail.com, choudhurytuhinangshu@gmail.com \\
nikhilk@ee.iitb.ac.in, aditya@iisc.ac.in 
}
\begin{document}

\maketitle

\begin{abstract}
We consider the problem of adaptively PAC-learning a probability distribution $\mathcal{P}$'s mode by querying an oracle for information about a sequence of i.i.d. samples $X_1, X_2, \ldots$ generated from $\mathcal{P}$. We consider two different query models: (a) each query is an index $i$ for which the oracle reveals the value of the sample $X_i$, (b) each query is comprised of two indices $i$ and $j$ for which the oracle reveals if the samples $X_i$ and $X_j$ are the same or not. For these query models, we give sequential mode-estimation algorithms
which, at each time $t$, either make a query to the corresponding oracle
based on past observations, or decide to stop and output
an estimate for the distribution's mode, required to be correct with a specified confidence. We analyze the query complexity of these algorithms for any underlying distribution $\mathcal{P}$, and derive corresponding lower bounds on the optimal query complexity under the two querying models.
\end{abstract}

\section{Introduction}
\noindent Estimating the most likely outcome of a probability distribution is a useful primitive in many computing applications such as counting, natural language processing, clustering, etc. We study the probably approximately correct (PAC) sequential version of this problem in which the learner faces a stream of elements sampled independently and identically distributed (i.i.d.) from an unknown probability distribution, wishing to learn an element with the highest probability mass on it (a mode) with confidence. At any time, the learner can issue queries to obtain information about the identities of samples in the stream, and aims to use as few queries as possible to learn the distribution's mode with high confidence. 
Specifically, we consider two natural models for sample identity queries -- (a) each query, for a single sample of the stream so far, unambiguously reveals the identity ({\em label}) of the sample, (b) each query, for a pair of samples in the stream, reveals whether they are the same element or not.

A concrete application of mode estimation (and one of the main reasons that led to this formulation) is the problem of adaptive, {\em partial} clustering, where the objective is to find the largest cluster (i.e., equivalence class) of elements as opposed to learning the entire cluster grouping \cite{mazumdar2017clustering,mazumdar2017query,mazumdar2017theoretical,mazumdar2016clustering}. We are given a set of elements with an unknown clustering or partition, and would like to find the elements comprising the largest cluster or partition. Suppose a stream of elements is sampled uniformly and independently from the set, and at each time one can ask a {\em comparison oracle} questions of the form: ``Do two sampled elements $u$ and $v$ belong to the same cluster or not?" Under this uniformly sampled distribution for the element stream, the probability of an element belonging to a certain cluster is simply proportional to the cluster's size, so learning the heaviest cluster is akin to identifying the mode of the distribution of a sampled element's cluster label. 

We make the following contributions towards understanding the sequential query complexity for estimating the mode of a distribution using a stream of samples. (a) For both the individual-label and pairwise similarity query models, we give sequential PAC query algorithms which provably output a mode of the sample-generating distribution with large probability, together with guarantees on the number of queries they issue. These query complexity upper bounds explicitly depend on parameters of the unknown discrete probability distribution, in that they scale inversely with the gap between the probability masses at the mode and at the other elements in the distribution's support. The proposed algorithms exploit the probabilistic i.i.d. structure of the data stream to resolve uncertainty about the mode in a query-efficient fashion, and are based on the upper and lower confidence bounds (UCB, LCB) principle from online learning to guide adaptive exploration across time; in fact, we employ more refined empirical Bernstein bounds \cite{maurer2009empirical} to take better advantage of the exact structure of the unknown sample distribution.   (b) We derive fundamental limits on the query complexity of any sequential mode-finding algorithm for both query models, whose constituents resemble those of the query complexity upper bounds for our specific query algorithms above. This indicates that the algorithms proposed make good use of their queries and the associated information in converging upon a mode estimate. (c) We report numerical simulation results that support our theoretical query complexity performance bounds. 

\subsection{Related Work}
The mode estimation problem has been studied classically in the batch or non-sequential setting since many decades back, dating to the work of Parzen \cite{parzen62estimation} and Chernoff \cite{chernoff1964estimation}, among others. This line of work, however, focuses on the objective of consistent mode estimation (and the asymptotic distribution of the estimate) for continuous distributions, instead of finite-time PAC guarantees for large-support discrete distributions as considered here. Our problem is essentially a version of sequential composite hypothesis testing with adaptive ``actions" or queries, and with an explicit high-confidence requirement on the testing algorithm upon stopping. 

There has been a significant amount of work in the streaming algorithms community, within computer science, on the "heavy hitter" problem -- detecting the most frequent symbol in an arbitrary (non-stochastic) sequence -- and generalizations thereof pertaining to estimation of the empirical moments, see e.g., \cite{misra1982finding}, \cite{karp2003simple}, \cite{manku2002approximate}. However the focus here is on understanding resource limits, such as memory and computational effort, on computing on arbitrary (non stochastic / unstructured) streams that arise in highly dynamic network applications. We are instead interested in quantifying the {\em statistical} efficiency of mode estimation algorithms in terms of the structure of the generating probability distribution. 

Adaptive decision making and resolution of the explore-exploit trade off is the subject of work on the well-known multi-armed bandit model, e.g., \cite{bubeck2012regret}. At an abstract level, our problem of PAC-mode estimation is like a multi-armed bandit ``best arm" identification problem \cite{kaufmann2016complexity} but with a different information structure -- queries are not directly related to any utility structure for rewards as in bandits. 

Perhaps the closest work in spirit to ours is the recent work by Mazumdar and co-authors  \cite{mazumdar2017clustering,mazumdar2017query,mazumdar2017theoretical,mazumdar2016clustering}, where the aim is to learn the entire structure of an unknown clustering by making information queries. In this regard, studying the mode estimation problem helps to shed light on the simpler, and often more natural, objective of merely identifying the largest cluster in many machine learning applications, which has not been addressed by previous work.  

\section{Problem formulation} \label{problem_formulation}
In this section we develop the required notation and describe the query models. \\
Consider an underlying unknown discrete probability distribution $\mathcal{P}$ with the support set $\{ 1,2,...k \}$. For each $i \in \{1,2,\ldots,k\}$ and a random variable $X \sim \mathcal{P}$, let $Pr(X=i)= p_i(\mathcal{P}) \equiv p_i$. 

We would like to estimate the {\em mode} of the unknown distribution $\mathcal{P}$, defined as any member of the set\footnote{$\arg\max_{i \in S} p_i$ is used to denote the set of all maximisers of the function $i \to p_i$ on $S$.} $\arg\max_{1 \leq i \leq k}\, p_i$. Towards this, we assume query access to an {\em oracle} containing a sequence of independently and identically distributed (i.i.d.) samples from $\mathcal{P}$, denoted $X_1, X_2, \ldots$ We study the mode estimation problem under the following query models to access the values of these i.i.d. samples:
\begin{enumerate}
    \item \textbf{Query Model 1 (QM1) :} For each query, we specify an index $i \ge 1$ following which the oracle reveals the value of the sample $X_i$ to us. Since the samples are i.i.d., without loss of generality, we will assume that the $t^{th}$ successive query reveals $X_t$, $t = 1, 2, \ldots$
    \item \textbf{Query Model 2 (QM2) :} In this model, the oracle answers {\em pairwise similarity} queries. For each query, we specify {\em two} indices $i, j \in \{1, 2, \ldots \}$, following which the oracle reveals if the two samples $X_i$ and $X_j$ are equal or not. Formally, the response of the oracle to a query $(i, j)$ is
    \[ 
\mathcal{O}(i,j) = \left\{ \begin{array}{ll}
         +1 & \mbox{if $X_i=X_j$},\\
        -1 & \mbox{otherwise}.\end{array} \right.
    \]

    Note that to know the value of a sample $X_i$ in this query model, multiple pair-wise queries to the oracle might be required.
\end{enumerate}
For each of the query models above, our goal is to design a statistically efficient sequential mode-estimation algorithm which, at each time $t$, either makes a query to the oracle based on past observations or decides to stop and output an estimate for the distribution's mode. Mathematically, a sequential algorithm with a stopping rule decides an action $A_t$ at each time $t \ge 1$ depending only on past observations. For QM1, $A_t$ can be one of the following:
\begin{itemize}
\item \textit{(continue,$t$)}: Query the index $t$,
\item \textit{(stop,$\hat{m}$)}, $\hat{m} \in \{1, \ldots, k\}$: Stop querying and return $\hat{m}$ as the mode estimate.
\end{itemize}
For QM2, $A_t$ can be one of the following:
\begin{itemize}
\item \textit{(continue,$t$)}: Continue with the next round, with possibly multiple sequential pairwise queries of the form $(t, j)$ for some $j < t$. That is, we compare the sample $X_t$ with some or all of the previous samples.
 
\item \textit{(stop,$\hat{m}$)}, $\hat{m} \in \{1, \ldots, k\}$: Stop querying and return $\hat{m}$ as the mode estimate.
\end{itemize}
The stopping time of the algorithm is defined as 
\begin{align*}
\tau := \inf\{ t \geq 1 : A_t = (stop,.) \}.
\end{align*} 
The cost of the algorithm is measured by its {\em query complexity} -- the number of queries made by it before stopping.
For $\delta > 0$, a sequential mode-estimation algorithm is defined to be a \textbf{$\delta$-true mode estimator} if it correctly identifies the mode for every  distribution $\mathcal{P}$ on the support set $\{1,2,\ldots,k\}$ with probability at least $1 - \delta$, i.e., $\mathbb{P}_{\mathcal{P}}[\hat{m} \in \arg\max_{1 \leq i \leq k} p_i(\mathcal{P})] \geq 1-\delta$. The goal is to obtain $\delta$-true mode estimators for each query model (QM1 and QM2) that require as few queries as possible. For a $\delta$-true mode estimator $\mathcal{A}$ and a distribution $\mathcal{P}$, let $Q_{\delta}^{\mathcal{P}}(\mathcal{A})$ denote the number of queries made by a $\delta$-true mode estimator when the underlying unknown distribution is $\mathcal{P}$. We are interested in studying the optimal query complexity of $\delta$-true mode estimators. Note that $Q_{\delta}^{\mathcal{P}}(\mathcal{A})$ is itself a random quantity, and our results either hold in expectation or with high probability. \\
For the purpose of this paper, we assume that $p_1>p_2\geq....\geq p_k$ i.e. the mode of the underlying distribution is $1$, and hence a $\delta$-true mode estimator returns $1$ with probability at least $(1-\delta)$. In Sections~\ref{sec_QM1} and \ref{sec_QM2}, we discuss $\delta$-true mode estimators and analyze their query complexity for the QM1 and QM2 query models respectively.  We provide some experimental results in Section~\ref{sec:exp} and further explore a few variations of the problem in Section~\ref{discussion}. Several proofs have been relegated to the Appendix.

\section{Mode estimation with QM1} \label{sec_QM1}
We will begin by presenting an algorithm for mode estimation under QM1 and analyzing its query complexity.
\subsection{Algorithm} \label{algo_qm1}
Recall that under the QM1 query model, querying the index $t$ to the oracle reveals the value of the corresponding sample, $X_t$, generated i.i.d. according to the underlying unknown distribution $\mathcal{P}$. During the course of the algorithm, we  form bins for each element $i$ in the support $\{1,2,\ldots,k\}$ of the underlying distribution. Bin $j$ is created when the first sample with value $j$ is revealed and any further samples with that value are `placed' in the same bin. For each query $t$ and revealed sample value $X_t$, define $Z^i_t$ for $i \in \{1,2,...k\}$ as follows.
\begin{equation*}
 Z^i_t =
  \begin{cases}
    1  & \quad \text{if }X_t=i\\
    0  & \quad \text{otherwise.}
  \end{cases} 
\end{equation*}
Note that for each given $i,t$, $Z^i_t$ is a Bernoulli random variable with $E[Z_t^i] = p_i$. Also for any given $i$, $\{Z_t^i\}$ are i.i.d. over time.\\
Our mode estimation scheme is presented in Algorithm~\ref{alg1}. At each stage of the algorithm, we maintain an empirical estimate of the probability of bin $i$, $p_i$, for each $i \in \{1,2,\ldots,k\}$. Let $\hat{p}_i^t$ denote the estimate at time $t$, given by  
\begin{equation}\label{p_hat}
\hat{p}_i^t=\frac{\sum_{j=1}^t Z^i_j}{t},
\end{equation}
where recall that $Z^i_j$ for $j=1,2....t$ are the $t$ i.i.d. samples. Also, at each time instant, we maintain confidence bounds for the estimate of $p_i$. The confidence interval for the $i^{th}$ bin probability at the $t^{th}$ iteration is denoted by  $\beta_i^t$, and it captures the deviation of the empirical value $\hat{p}_{i}^t$ from its true value $p_i$. In particular, the confidence interval value $\beta_i^t$ is chosen so that the true value $p_i$ lies in the interval $[\hat{p}_i^t - \beta_i^t, \hat{p}_i^t + \beta_i^t]$ with a significantly large probability. The lower and upper boundaries of this interval are referred to as the lower confidence bound (LCB) and the upper confidence bound (UCB) respectively. The particular choice for the value of $\beta_i^t$ that we use for our algorithm is presented in Section~\ref{analysis_qm1}. 

Finally, our stopping rule is as follows : $A_t=\textit{(stop, i)}$ when there exists a bin $i \in \{1,2,\ldots,k\}$ whose LCB is greater than the UCB of all the other bins, upon which the index $i$ is output as the mode estimate. 

Given the way our confidence intervals are defined, this ensures that the output of the estimator is the mode of the underlying distribution with large probability. 

\begin{algorithm} 
\caption{Mode estimation algorithm under QM1} 
\label{alg1} 
\begin{algorithmic}[1] 
    \STATE $t=1$
    \STATE $A_0=$ \textit{(continue,1)} : Obtain $X_1$.
    \LOOP
        \IF{a bin with value $X_t$ already exists}
            \STATE Add query index $t$ to the corresponding bin. 
        \ELSE
            \STATE Create a new bin with value $X_t$.
        \ENDIF
        \STATE Update the empirical estimate $\hat{p}_i^t$ \eqref{p_hat} and confidence interval $\beta_i^t$ \eqref{beta} for all bins $i \in \{1,2,\ldots,k\}$.
        \STATE \begin{equation*}
                 A_t =
                  \begin{cases}
                    \text{\textit{(stop,i)}:} & \hspace{0.2in} \text{if }\exists i, \text{ s.t. } \forall j \neq i\\
                    \vspace{0.15in}\text{Exit, Output $i$}&\hspace{0.25in}\hat{p}_i^t - \beta_i^t > \hat{p}_j^t + \beta_j(t), \\
                    \text{\textit{(continue,t+1)}:} & \hspace{0.2in} \text{otherwise } \\
                    \text{Obtain $X_{t+1}$}
                  \end{cases} 
                \end{equation*}

        \STATE $t=t+1$
    \ENDLOOP    
\end{algorithmic}
\end{algorithm}

\subsection{Analysis} \label{analysis_qm1}
\begin{theorem}\label{analysis_lem}
For the following choice of $\beta_{i}^t$, Algorithm~\ref{alg1} is a $\delta$-true mode estimator. 
\begin{equation}\label{beta}
\beta_{i}^t = \sqrt{\frac{2V_t(Z^{i})\log (4kt^2/\delta)}{t}} + \frac{7 \log (4kt^2/\delta)}{3(t-1)}, 
\end{equation}
where $\displaystyle V_t(Z^{i}) = \frac{1}{t(t-1)} \sum_{1\leq p<q\leq t} (Z^{i}_p - Z^{i}_q)^2 $ is the empirical variance.
\end{theorem}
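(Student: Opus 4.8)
The plan is to isolate a single high-probability ``good event'' on which the confidence intervals are simultaneously valid for every bin at every time, show that this event has probability at least $1-\delta$, and then argue deterministically that on this event the stopping rule can never commit to a wrong answer. Concretely, define
\[
\mathcal{E} := \bigcap_{i=1}^k \bigcap_{t \geq 1} \left\{ p_i \in [\hat{p}_i^t - \beta_i^t,\ \hat{p}_i^t + \beta_i^t] \right\}.
\]
The two ingredients are then (a) $\mathbb{P}[\mathcal{E}] \geq 1-\delta$, and (b) on $\mathcal{E}$, whenever Algorithm~\ref{alg1} stops it outputs the mode.

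For (a), I would first observe that for each fixed $i$ the variables $\{Z_t^i\}_{t\geq 1}$ are i.i.d.\ in $[0,1]$ with mean $p_i$, and that the quantity $V_t(Z^i) = \frac{1}{t(t-1)}\sum_{1\le p<q\le t}(Z_p^i - Z_q^i)^2$ coincides with the usual (Bessel-corrected) sample variance $\frac{1}{t-1}\sum_{j=1}^t (Z_j^i - \hat p_i^t)^2$ via the standard pairwise identity $\sum_{p<q}(Z_p^i - Z_q^i)^2 = t\sum_j (Z_j^i - \hat p_i^t)^2$. This is exactly the object for which the empirical Bernstein inequality of Maurer and Pontil \cite{maurer2009empirical} gives, with probability at least $1-\delta'$, the one-sided deviation $\mathbb{E}[Z^i] \le \hat p_i^t + \sqrt{2 V_t(Z^i)\log(2/\delta')/t} + 7\log(2/\delta')/(3(t-1))$, and symmetrically for the lower tail. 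Matching $\log(2/\delta')$ with $\log(4kt^2/\delta)$ forces the one-sided failure probability to be $\delta' = \delta/(2kt^2)$, so each two-sided event at $(i,t)$ fails with probability at most $\delta/(kt^2)$. A union bound over $i \in \{1,\dots,k\}$ and $t \ge 1$ then completes this step; the only subtlety is that at $t=1$ the term $7\log(\cdot)/(3(t-1))$ diverges, so $\beta_i^1 = +\infty$ and the $t=1$ events hold vacuously, leaving the effective sum $\sum_{i=1}^k \sum_{t\ge 2} \delta/(kt^2) = \delta\,(\pi^2/6 - 1) < \delta$.

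For (b), suppose on $\mathcal{E}$ the algorithm stops at some time $t$ and returns $i$. The stopping rule guarantees $\hat p_i^t - \beta_i^t > \hat p_j^t + \beta_j^t$ for every $j \neq i$. On $\mathcal{E}$ the left side is a lower bound for $p_i$ and the right side is an upper bound for $p_j$, so $p_i \ge \hat p_i^t - \beta_i^t > \hat p_j^t + \beta_j^t \ge p_j$ for all $j \neq i$; hence $i$ is the unique maximiser, i.e.\ $i = 1$. Consequently the event ``stop with an incorrect output'' is contained in $\mathcal{E}^c$, and therefore has probability at most $\delta$, which is precisely the $\delta$-true mode estimator guarantee.

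The main obstacle I anticipate is purely in the bookkeeping of constants rather than in any deep probabilistic step: one must confirm that the exact form of $\beta_i^t$ in \eqref{beta} is the two-sided Maurer--Pontil bound instantiated at the right per-$(i,t)$ confidence level, and that the resulting union-bound series stays below $\delta$ (which works only because the $t=1$ interval is vacuous, shrinking the relevant tail sum $\sum_{t\ge 2}t^{-2}$ below $1$). I would also flag that this theorem certifies only correctness of the returned estimate, not that the algorithm terminates; finiteness of the stopping time is a separate matter addressed by the query-complexity analysis.
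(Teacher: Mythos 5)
Your proof is correct and follows essentially the same route as the paper: empirical Bernstein confidence intervals at per-pair level $\delta/(2kt^2)$, a union bound over all pairs $(i,t)$, and the observation that on the resulting good event the stopping rule cannot return a non-mode. The only difference is that the paper's good event additionally includes concentration of the sample variance $V_t(Z^i)$ around $p_i(1-p_i)$ (their event $\mathcal{E}_2$), which is not needed for this correctness statement but is reused in the query-complexity upper bound; your explicit treatment of the vacuous $t=1$ case is a point the paper leaves implicit.
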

\begin{proof}
This proof is based on confidence bound arguments. To construct the confidence intervals for the  probability values $\{p_i\}$'s, we use the empirical version of the Bernstein bound given in \cite{maurer2009empirical}. The result used is stated as Theorem~\ref{bern_thm} in Appendix~\ref{emp_bern}. Using the result in our context, we get the following for any given pair $(i,t)$ with probability at least $(1-\delta_1)$: 
\begin{equation}
|p_i - \hat{p}_i^t| \leq \sqrt{\frac{2V_t(Z^i)\log (2/\delta_1)}{t}} + \frac{7 \log (2/\delta_1)}{3(t-1)},
\label{Eqn:confinterv}
\end{equation}
where $V_t(Z^i)$ is the sample variance, i.e., $\displaystyle V_t(Z^i) = \frac{1}{t(t-1)} \sum_{1\leq p<q\leq t} (Z^i_p - Z^i_q)^2 $.\\
To establish confidence bounds on the sample variance, we use the result given in \cite{maurer2009empirical}, which is stated as Theorem~\ref{var} in Appendix~\ref{emp_bern}. Using the result in our context, and noting that the expected value of $V_t(Z^i)$ would be $p_i(1-p_i)$, we get the following for any given pair $(i,t)$ with probability at least $(1-\delta_2)$: 
\begin{equation}
|\sqrt{p_i(1-p_i)} - \sqrt{V_t(Z^i)}| \leq \sqrt{\frac{2\log (1/\delta_2)}{t-1}}.\label{Eqn:confinterv_var}
\end{equation}
Let $\mathcal{E}_1$ denote the error event that for some pair $(i,t)$ the confidence bound \eqref{Eqn:confinterv} around $\hat{p}_i^t$ is violated. Also, let $\mathcal{E}_2$ denote the error event that for some pair $(i,t)$ the confidence bound \eqref{Eqn:confinterv_var} around the sample variance $V_t(Z^i)$ is violated. Choosing $\delta_1 = \delta_2 = \frac{\delta}{2kt^2}$, and taking the union bound over all $i,t$, from \eqref{Eqn:confinterv} and \eqref{Eqn:confinterv_var}, we get $\mathbb{P}[\mathcal{E}_1] \leq \delta/2$ and $\mathbb{P}[\mathcal{E}_2] \leq \delta/2$. Hence we get that
\begin{equation}\label{union}
\mathbb{P}[\mathcal{E}_1^c \cap \mathcal{E}_2^c] \geq 1 - \delta.
\end{equation} 
This means that with probability at least $1-\delta$ the confidence bounds corresponding to both equations \eqref{Eqn:confinterv} and \eqref{Eqn:confinterv_var} hold true for all pairs $(i,t)$.\\
We now show that if the event $\mathcal{E}_1^c \cap \mathcal{E}_2^c$ is true, then Algorithm~\ref{alg1} returns $1$ as the mode. To see this, assume the contrary that the algorithm returns $i \neq 1$. Under $\mathcal{E}_1^c \cap \mathcal{E}_2^c$ the confidence intervals hold true for all pairs $(i,t)$ and hence the stopping condition defined in Line~10, Algorithm~\ref{alg1} will imply
\begin{equation*}
p_i \geq \hat{p}_i^t - \beta_{i}^t > \hat{p}_1^t + \beta_{1}^t \geq p_1 
\end{equation*}
which is false. Thus Algorithm~\ref{alg1} returns $1$ as the mode if the event $\mathcal{E}_1^c \cap \mathcal{E}_2^c$ is true.
Hence, the probability of returning $1$ as the mode is at least $\mathbb{P}[\mathcal{E}_1^c \cap \mathcal{E}_2^c]$, which by \eqref{union} implies that Algorithm~\ref{alg1} is a $\delta$-true mode estimator. 
\end{proof}

\subsection{Query Complexity Upper bound} \label{qm1_ub}
\begin{theorem} \label{lem_qm1_ub}
For a $\delta$-true mode estimator $\mathcal{A}_1$, corresponding to Algorithm~\ref{alg1}, we have the following with probability at least $(1-\delta)$.
\begin{equation*}
{Q}_{\delta}^{\mathcal{P}}(\mathcal{A}_1) \leq \frac{592}{3} \frac{p_1}{(p_1-p_2)^2} \log \left( \frac{592}{3} \sqrt{\frac{k}{\delta}} \frac{p_1}{(p_1-p_2)^2} \right). 
\end{equation*}
\end{theorem}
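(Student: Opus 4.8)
The plan is to show that the same good event that makes Algorithm~\ref{alg1} correct also forces it to stop early, and then to bound the stopping time $\tau$ deterministically on that event. Concretely, I would reuse the event $\mathcal{E}_1^c \cap \mathcal{E}_2^c$ from the proof of Theorem~\ref{analysis_lem}, on which simultaneously every mean confidence interval \eqref{Eqn:confinterv} and every variance confidence interval \eqref{Eqn:confinterv_var} holds for all pairs $(i,t)$; recall this event has probability at least $1-\delta$. Since the claimed bound is a high-probability ($1-\delta$) statement, it then suffices to exhibit a deterministic time $T$ such that, on this event, the stopping test in Line~10 is necessarily triggered by time $T$, so that $\tau \le T$.

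Next I would derive a sufficient stopping condition. Under $\mathcal{E}_1^c$ we have $\hat{p}_1^t - \beta_1^t \ge p_1 - 2\beta_1^t$ and $\hat{p}_j^t + \beta_j^t \le p_j + 2\beta_j^t$, so a sufficient condition for $\hat{p}_1^t - \beta_1^t > \hat{p}_j^t + \beta_j^t$ to hold for every $j \neq 1$ is
\[ p_1 - p_j > 2\beta_1^t + 2\beta_j^t \quad \text{for all } j \ge 2. \]
To collapse this into a single scalar inequality I would upper bound every $\beta_j^t$ by a common quantity $B_t$. The key observation is that $p_j(1-p_j) \le p_j \le p_1$ for all $j$, hence $\sqrt{p_j(1-p_j)} \le \sqrt{p_1}$; combined with \eqref{Eqn:confinterv_var} this gives, on the good event, $\sqrt{V_t(Z^j)} \le \sqrt{p_1} + \sqrt{2\log(2kt^2/\delta)/(t-1)}$ uniformly in $j$, and therefore $\beta_j^t \le B_t$ for an explicit $B_t$ depending only on $t,k,\delta,p_1$. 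Since $p_1 - p_j \ge p_1 - p_2$, it then suffices to find $T$ with $4B_t < p_1 - p_2$ for all $t \ge T$.

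It remains to make $4B_t < p_1-p_2$ quantitative. The dominant term of $B_t$ behaves like $\sqrt{2p_1\log(4kt^2/\delta)/t}$, while the residual terms (the variance slack times the confidence width, and the $7\log(4kt^2/\delta)/3(t-1)$ Bernstein term) are lower order in $t$; I would bound all of them crudely so that $4B_t < p_1-p_2$ reduces to a scalar inequality of the form
\[ t \ge C_0 \cdot \frac{p_1}{(p_1-p_2)^2}\,\log\!\left(\frac{4kt^2}{\delta}\right) \]
for an explicit absolute constant $C_0$. The final and most delicate step is inverting this self-referential inequality: writing $\log(4kt^2/\delta) = \log(4k/\delta) + 2\log t$ and invoking a standard lemma that any bound of the form $t \ge a\log(bt)$ is implied by $t \ge c\,a\log(ab)$ for a suitable constant, I would verify that the closed-form choice
\[ T = \frac{592}{3}\,\frac{p_1}{(p_1-p_2)^2}\,\log\!\left(\frac{592}{3}\sqrt{\frac{k}{\delta}}\,\frac{p_1}{(p_1-p_2)^2}\right) \]
satisfies the scalar inequality, forcing $\tau \le T$ on the good event and completing the proof.

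I expect the main obstacle to be the constant bookkeeping in this last step: cleanly absorbing the lower-order terms of $B_t$ together with the factor-of-four inflation coming from the LCB/UCB doubling into a single constant, and then discharging the transcendental inequality with exactly the stated constant $592/3$ rather than merely up to unspecified absolute constants. Everything before that is routine, but matching the precise closed form will require careful tracking of the $\sqrt{2}$ factors, the variance slack, and the $\log(4k/\delta) + 2\log t$ split inside the logarithm.
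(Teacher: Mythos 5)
Your proposal follows essentially the same route as the paper's proof in Appendix~B: condition on the event $\mathcal{E}_1^c \cap \mathcal{E}_2^c$, observe that the stopping test fires once $2\beta_1^t + 2\beta_j^t < p_1 - p_j$ for all $j \neq 1$, use the variance confidence bound \eqref{Eqn:confinterv_var} together with $p_j(1-p_j) \le p_1$ to reduce this to a scalar inequality of the form $t \gtrsim C\,p_1(p_1-p_2)^{-2}\log(kt^2/\delta)$, and invert the transcendental inequality to obtain the stated closed form. The only cosmetic difference is that you bound all $\beta_j^t$ by a single uniform $B_t$ and compare against $p_1-p_2$, whereas the paper imposes the per-bin threshold $\beta_j^t,\beta_1^t < (p_1-p_j)/4$ and takes the maximum of the resulting times $t_j^*$ (attained at $j=2$); the two coincide for this theorem, and your honest caveat about the constant bookkeeping is exactly where the paper's own derivation does its work (the quadratic in $\sqrt{\alpha}$ with $\alpha = t/(2\log(4kt^2/\delta))$).
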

\begin{proof}
If the event $\mathcal{E}_1^c \cap \mathcal{E}_2^c$ is true, it implies that all confidence intervals hold true. We find a value $T^*$, which ensures that by $t=T^*$, the confidence intervals of the bins have separated enough such that the algorithm stops. Since at each time one query is made, this value of $T^*$ would also be an upper bound on the query complexity $Q_\delta^{\mathcal{P}}(\mathcal{A})$. The derivation of $T^*$ has been relegated to Appendix~\ref{proof_qm1_ub}, which gives the upper bound as stated above.
\end{proof}

\subsection{Query Complexity Lower bound} \label{qm1_lb}

\begin{theorem}\label{lem_qm1_lb} 
For any $\delta$-true mode estimator $\mathcal{A}$, we have,
\begin{equation*}
\mathbb{E}[{Q}_{\delta}^{\mathcal{P}}(\mathcal{A})] \geq \frac{p_1}{(p_1-p_2)^2}\log (1/2.4\delta).
\end{equation*}
\end{theorem}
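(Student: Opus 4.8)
The plan is to establish this information-theoretic lower bound via a change-of-measure argument, which is the standard technique for proving sample-complexity lower bounds in sequential hypothesis testing and best-arm identification. The core idea is that any $\delta$-true mode estimator must, on average, gather enough statistical evidence to distinguish the true distribution $\mathcal{P}$ from a cleverly chosen alternative $\mathcal{P}'$ under which the correct answer differs. **First I would** construct such an alternative: take $\mathcal{P}$ with $p_1 > p_2 \geq \cdots$ (so the mode is $1$) and perturb it to $\mathcal{P}'$ by swapping or slightly shifting mass between elements $1$ and $2$ so that under $\mathcal{P}'$, element $2$ becomes the mode. Concretely, I would move mass $\epsilon$ from bin $1$ to bin $2$ (with $\epsilon \gtrsim p_1 - p_2$) so that $p_2' > p_1'$, forcing any $\delta$-true estimator to output $2$ rather than $1$ with high probability under $\mathcal{P}'$.

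**Next I would** invoke the standard transportation/data-processing lemma for sequential tests (in the vein of Kaufmann, Capp\'e, and Garivier's lower bounds for bandits). This lemma states that for any $\delta$-true estimator,
\begin{equation*}
\mathbb{E}_{\mathcal{P}}[Q_\delta^{\mathcal{P}}(\mathcal{A})] \cdot \mathrm{KL}(\mathcal{P} \,\|\, \mathcal{P}') \geq d(\delta, 1-\delta),
\end{equation*}
where $d(\delta,1-\delta) = \delta\log\frac{\delta}{1-\delta} + (1-\delta)\log\frac{1-\delta}{\delta}$ is the binary relative entropy, and by the confidence requirement $d(\delta,1-\delta) \geq \log\frac{1}{2.4\delta}$. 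Rearranging gives a lower bound on the expected number of queries in terms of $1/\mathrm{KL}(\mathcal{P}\,\|\,\mathcal{P}')$. **The main work** is then to bound the KL divergence: since $\mathcal{P}$ and $\mathcal{P}'$ differ only in bins $1$ and $2$, a second-order Taylor expansion of the KL divergence of the two Bernoulli-type marginals yields $\mathrm{KL}(\mathcal{P}\,\|\,\mathcal{P}') \lesssim \frac{(p_1 - p_2)^2}{p_1}$ (up to constants), after substituting the optimal perturbation $\epsilon \approx p_1 - p_2$. Plugging this into the transportation inequality produces the claimed $\frac{p_1}{(p_1-p_2)^2}\log(1/2.4\delta)$ scaling.

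**The hard part will be** handling the sequential, adaptive nature of the queries correctly — one must ensure the change-of-measure accounts for the fact that the algorithm's queries and stopping time depend on past observations, so a naive per-sample KL bound is insufficient. This is precisely where the Wald-type identity / optional-stopping argument underlying the transportation lemma does the heavy lifting: the expected log-likelihood ratio accumulated up to the (random) stopping time equals $\mathbb{E}_{\mathcal{P}}[Q] \cdot \mathrm{KL}(\mathcal{P}\,\|\,\mathcal{P}')$ because under QM1 each query reveals one fresh i.i.d. sample. **I would** therefore carefully verify that the information per query is exactly the single-sample KL divergence, and that the perturbation is chosen to simultaneously maximize the KL lower bound while keeping $\mathcal{P}'$ a valid distribution with a distinct mode. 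Getting the constant $2.4$ right requires using the precise inequality $d(\delta,1-\delta) \geq \log\frac{1}{2.4\delta}$ rather than a looser bound.
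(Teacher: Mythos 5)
Your plan is essentially the paper's own proof: a change-of-measure/transportation argument (Wald's identity plus the data-processing inequality, giving $\mathbb{E}[\tau]\cdot D(\mathcal{P}\|\mathcal{P}')\ge \log(1/2.4\delta)$), followed by minimizing the KL divergence over alternatives $\mathcal{P}'$ that perturb only the masses of elements $1$ and $2$, bounded by a chi-square--type (second-order) estimate to get $D(\mathcal{P}\|\mathcal{P}')\lesssim (p_1-p_2)^2/p_1$. One detail to fix: the perturbation you describe, moving mass $\epsilon\approx p_1-p_2$ from bin $1$ to bin $2$, is a full swap whose KL is $(p_1-p_2)\log(p_1/p_2)$, which can greatly exceed $(p_1-p_2)^2/p_1$ when $p_2\ll p_1$; the right (near-optimal) choice, as in the paper, sets $p_1'=\tfrac{p_1+p_2}{2}-\epsilon$ and $p_2'=\tfrac{p_1+p_2}{2}+\epsilon$ with $\epsilon\to 0$, i.e., you need only move slightly more than $(p_1-p_2)/2$, and it is this choice that makes the chi-square denominator of order $p_1$ and yields the claimed constant-free bound.
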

\begin{proof}
Let $x(\mathcal{P}) = \argmax_{i \in [k]} p_i$ denote the mode of the distribution $\mathcal{P}$. By assumption, we have $x(\mathcal{P}) = 1$. Consider any $\mathcal{P}'$ such that $x(\mathcal{P}')\neq x(\mathcal{P})=1$. Let $\mathcal{A}$ be a $\delta$-true mode estimator and let $\hat{m}$ be its output. Then, by definition, 
\begin{gather} 
\begin{aligned}\label{delta_p}
&\mathcal{P}(\hat{m}=1) \geq 1-\delta\\
&\mathcal{P'}(\hat{m}=1) \leq \delta.
\end{aligned}
\end{gather}
Let $\tau$ be the stopping time associated with estimator $\mathcal{A}$. Using Wald's lemma \cite{wald1944cumulative} we have,
\begin{align}
\nonumber 
\mathbb{E}_\mathcal{P} \left[ \sum_{t=1}^{\tau} \log \frac{p(X_t)}{p'(X_t)} \right] &= \mathbb{E}_\mathcal{P}[\tau]. \mathbb{E}_\mathcal{P} \left[ \log \frac{p(X_1)}{p'(X_1)} \right] \\
&=\mathbb{E}_\mathcal{P}[\tau]. D(\mathcal{P}||\mathcal{P}').
\label{eqn:chngmeas}
\end{align}
where $D(\mathcal{P}||\mathcal{P}')$ refers to the Kullback-Leibler (KL) divergence  between the distributions $\mathcal{P}$ and $\mathcal{P'}$.\\
Also,
\begin{align*} 
&\mathbb{E}_\mathcal{P} \left[ \sum_{t=1}^{\tau} \log \frac{p(X_t)}{p'(X_t)} \right] \\&= \mathbb{E}_\mathcal{P} \left[ \log \frac{p(X_1,X_2....X_{\tau})}{p'(X_1,X_2....X_{\tau})} \right]\\
&=D(p(X_1,....X_{\tau})||p'(X_1,....X_{\tau}))\\
&\geq D(Ber(\mathcal{P}(\hat{m} = 1))||Ber(\mathcal{P}'(\hat{m} = 1))),
\end{align*}
where $Ber(x)$ denotes the Bernoulli distribution with parameter $x \in (0,1)$ and the last step is obtained by the data processing inequality \cite{cover2012elements}. Furthermore we have,
$$ 
D(Ber(\mathcal{P}(\hat{m}=1))||Ber(\mathcal{P}'(\hat{m}=1)))\\ \geq \log (1/2.4\delta),
$$
where the above follows from \eqref{delta_p} and \cite[Remark 2]{kaufmann2016complexity}. Finally, combining with \eqref{eqn:chngmeas} and noting that the argument above works for any $\mathcal{P}'$ such that $x(\mathcal{P}) \neq x(\mathcal{P}')$, we have 
\begin{align} \label{inf_b} 
\mathbb{E}_\mathcal{P}[\tau] \geq \frac{\log (1/2.4\delta)}{\inf_{\mathcal{P}':x(\mathcal{P}')\neq x(\mathcal{P})} D(\mathcal{P}||\mathcal{P}')}.
\end{align}
We choose $\mathcal{P}'$ as follows, for some small $\epsilon>0$:
\begin{align*}
p_i'&=p_i, \,\, \forall i>2 \\
p_1'&=\frac{p_1+p_2}{2} -\epsilon = q-\epsilon, \, \, \text{where } q = \frac{p_1+p_2}{2}\\
p_2'&=\frac{p_1+p_2}{2} +\epsilon = q+\epsilon
\end{align*}
We have  
\begin{align*}
D(\mathcal{P}||\mathcal{P}') &= p_1\log \left( \frac{p_1}{q-\epsilon} \right) + p_2\log \left( \frac{p_2}{q+\epsilon} \right) \\
&= (p_1+p_2) \Bigg[ \frac{p_1}{p_1+p_2} \log \left( \frac{\frac{p_1}{p_1+p_2}}{\frac{q-\epsilon}{p_1+p_2}} \right) +\\
& \>\>\>\>\>\>\>\>\>\>\>\>\>\>\>\>\>\>\>\>\>\>\>\>\>\>\>\>\>\>\>\>\>\> \frac{p_2}{p_1+p_2} \log \left( \frac{\frac{p_2}{p_1+p_2}}{\frac{q+\epsilon}{p_1+p_2}} \right) \Bigg] \\
&= (p_1+p_2) D\left( \frac{p_1}{p_1+p_2} || \frac{q-\epsilon}{p_1+p_2} \right)\\ 
&\overset{(a)} \leq (p_1+p_2) \left[ \frac{(p_1-q+\epsilon)^2}{(q-\epsilon)\cdot (q+\epsilon)} \right]\\
&= (p_1+p_2) \left[ \frac{(p_1-p_2+2\epsilon)^2}{(p_1+p_2+2\epsilon)\cdot(p_1+p_2-2\epsilon)} \right]\\
&\overset{(b)}\approx \frac{(p_1-p_2)^2}{(p_1+p_2)} \\ &\leq \frac{(p_1-p_2)^2}{p_1},\stepcounter{equation}\tag{\theequation}\label{dpp_lb}
\end{align*}
where $(a)$ follows from \cite[Theorem 1.4]{popescu2016bounds} which gives an upper bound on the KL divergence and $(b)$ follows because $\epsilon$ can be arbitrarily small. 
Note that an upper bound on the stopping time $\tau$ would also give an upper bound on ${Q}_{\delta}^{\mathcal{P}}(\mathcal{A})$, since we have one query at each time. Hence, using \eqref{inf_b} and \eqref{dpp_lb}, we get the following lower bound:
\begin{equation*} \label{lower} 
\mathbb{E}[{Q}_{\delta}^{\mathcal{P}}(\mathcal{A})]  \geq \frac{p_1}{(p_1-p_2)^2} \log (1/2.4\delta). 
\end{equation*}
\end{proof}

\section{Mode estimation with QM2}\label{sec_QM2}
In this section, we present an algorithm for mode estimation under QM2 and analyse its query complexity. 
\subsection{Algorithm} \label{algo_qm2}
Recall that under the QM2 query model, a query to the oracle with indices $i$ and $j$ reveals whether the samples $X_i$ and $X_j$ have the same value or not. Our mode estimation scheme is presented in Algorithm~\ref{alg2}. During the course of the algorithm we form bins, and the $i^{th}$ bin formed is referred to as bin $i$. Here, each bin is a collection of indices having the same value as revealed by the oracle. Let $\sigma(i)$ denote the element from the support that that $i^{th}$ bin represents. The algorithm proceeds in rounds. In round $t$, let $b(t)$ denote the number of bins present. Based on the observations made so far, we form a subset of bins $\mathcal{C}(t)$. We go over the bins in $\mathcal{C}(t)$ one by one, and in each iteration within the round we query the oracle with index $t$ and a sample index $j_l$ from some bin $l$, i.e.,  $j_l \in \text{ bin } l$ for $l \in \mathcal{C}(t)$. The round ends when the oracle gives a positive answer to one of the queries or we exhaust all bins in $\mathcal{C}(t)$. So, the number of queries in round $t$ is at most $|\mathcal{C}(t)| \leq b(t)$. If we get a positive answer from the oracle, we add $t$ to the corresponding bin. A new bin will be created if the oracle gives a negative answer to each of these queries. \\ 
We will now describe how we construct the subset $\mathcal{C}(t)$ of bins we compare against in round $t$. To do so, for each bin $i$, corresponding to $\sigma(i)$, and for each round $t$, we maintain an empirical estimate of each $p_{\sigma(i)}$ during each round, denoted by $\hat{p}_{\sigma(i)}^t$. We also maintain confidence intervals for each $p_{\sigma(i)}$, denoted by $\beta_{\sigma(i)}^t$. The choice for $\beta_{\sigma(i)}^t$ is the same as that in QM1, given in \eqref{beta}. $\mathcal{C}(t)$ is formed in each round $t$ by considering only those bins whose UCB is greater than the LCB of all other bins. Mathematically, 
\begin{align*} 
\mathcal{C}(t) = \{i \in \{1, 2,& \ldots, b(t)\} \text{ such that } \nexists l \text{ for which }\\ & \hat{p}_l^t - \beta_l^t > \hat{p}_{\sigma(i)}^{t} + \beta_{\sigma(i)}^{t} \} \stepcounter{equation}\tag{\theequation}\label{c_t}
\end{align*}
The rest of the algorithm is similar to Algorithm~\ref{alg1}, including the stopping rule, i.e. we stop when there is a bin whose LCB is greater than the UCB of all other bins in $\mathcal{C}(t)$. As before, the choice of confidence intervals and the stopping rule ensures that when $A_t=\textit{(stop, i)}$, the corresponding element $\sigma(i)=1$ with probability at least $(1-\delta)$.


\begin{algorithm} 
\caption{Mode estimation algorithm under QM2} 
\label{alg2} 
\begin{algorithmic}[1] 
    \STATE $t=1$
    \STATE $A_0=$ \textit{(continue,1)} : Obtain $X_1$.
    \LOOP
        \STATE Form $\mathcal{C}(t)$ according to \eqref{c_t}.
        \STATE flag=0. 
        \FORALL{bins $l \in \mathcal{C}(t)$}
            \STATE Obtain $j_l \in$ bin $i$ .
            \IF{$\mathcal{O}(t,j_l)==+1$}
                \STATE Add $t$ to the corresponding bin.
                \STATE flag=1; \textit{BREAK}
            \ENDIF
        \ENDFOR
        \IF{flag==0}
            \STATE Create a new bin for index $t$.
        \ENDIF
        \STATE Update the empirical estimate $\hat{p}_i^t$ \eqref{p_hat} and confidence interval $\beta_i^t$ \eqref{beta} for all bins $i \in \{1,2,\ldots,k\}$.
        \STATE \begin{equation*}
                 A_t =
                  \begin{cases}
                    \text{\textit{(stop,i)}:} &  \text{if }\exists i, \text{ s.t. } \forall j \in \mathcal{C}(t)\\
                    \vspace{0.15in}\text{Exit, Output $i$}&\hat{p}_i^t - \beta_i^t > \hat{p}_j^t + \beta_j(t), \\
                    \text{\textit{(continue,t+1)}:} & \hspace{0.2in} \text{otherwise } \\
                    \text{Move to next round}
                  \end{cases} 
                \end{equation*}
        \STATE $t=t+1$
    \ENDLOOP
\end{algorithmic}
\end{algorithm}

\subsection{Analysis} \label{analysis_qm2}
\begin{theorem}
For the choice of $\beta_i^t$ as given by \eqref{beta}, Algorithm~\ref{alg2} is a $\delta$-true mode estimator. 
\end{theorem}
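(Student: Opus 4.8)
The plan is to mirror the proof of Theorem~\ref{analysis_lem} as closely as possible, since the confidence radius $\beta_i^t$ and the stopping rule are identical; the only genuinely new issue is that under QM2 a bin need not record every sample of its element. First I would reuse verbatim the construction of the good event: applying the empirical Bernstein inequality (Theorem~\ref{bern_thm}) and the variance bound (Theorem~\ref{var}) to the i.i.d.\ indicator sequences $\{Z_j^m\}_{j}$ for each element $m \in \{1,\dots,k\}$, and taking a union bound over $m$ and $t$ with $\delta_1 = \delta_2 = \delta/(2kt^2)$, yields exactly $\mathbb{P}[\mathcal{E}_1^c \cap \mathcal{E}_2^c] \ge 1-\delta$ as in \eqref{union}. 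Note that this event concerns the true per-element averages $\bar Z_m^t = \frac1t\sum_{j\le t} Z_j^m$, not the bin counts, so the substance of the argument is to connect the two.

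The central new lemma I would prove is that, on $\mathcal{E}_1^c \cap \mathcal{E}_2^c$, the mode bin is tracked exactly and is never dropped from $\mathcal{C}(t)$. I would argue this by induction on $t$: assuming the bin for element $1$ has so far collected precisely the indices $j$ with $X_j = 1$ (and that no duplicate bin for element $1$ exists), its empirical estimate coincides with $\bar Z_1^t$, so the bound \eqref{Eqn:confinterv} gives $\hat p_1^t + \beta_1^t \ge p_1$. Consequently no bin $l$ can satisfy $\hat p_l^t - \beta_l^t > \hat p_1^t + \beta_1^t \ge p_1$, since that would put an element's lower confidence bound strictly above $p_1$ while every element's true mass is at most $p_1$; hence by \eqref{c_t} the mode bin stays in $\mathcal{C}(t)$. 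Therefore when $X_{t} = 1$ the query against the mode bin's representative returns $+1$ and $X_{t}$ is added there, preserving the hypothesis. This simultaneously establishes $1 \in \mathcal{C}(t)$ for every round.

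With the mode bin in hand, correctness follows as in Theorem~\ref{analysis_lem}: if the algorithm stopped and returned $i \ne 1$, the stopping rule together with $1 \in \mathcal{C}(t)$ would force $\hat p_i^t - \beta_i^t > \hat p_1^t + \beta_1^t \ge p_1$, so bin $i$'s lower confidence bound would strictly exceed $p_1 > p_{\sigma(i)}$. To derive a contradiction I would observe that any bin for an element $m \ne 1$ holds a subset of the occurrences of $m$, so its empirical mean is at most $\bar Z_m^t$, and combine this domination with the good-event control on $\bar Z_m^t$ to conclude that such a bin's lower confidence bound cannot overshoot $p_{\sigma(i)}$.

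The step I expect to be the main obstacle is precisely this last one, and it is the essential difference from QM1. Because an excluded bin misses the samples arriving while it is outside $\mathcal{C}(t)$, duplicate bins for the same element may form, and the membership sequence of a non-mode bin is an adaptively chosen sub-sequence of $\{Z_j^m\}$ rather than the i.i.d.\ sequence itself; the empirical Bernstein bound therefore does not apply to it directly, and its width $\beta_i^t$ (built from the bin's own empirical variance) need not dominate the width for the full sequence. The delicate point is to show that this under-counting can only depress a non-mode bin's lower confidence bound, so that it can never be spuriously output above the correctly tracked mode bin. I would attempt to make this rigorous either by showing that on the good event an excluded bin's lower confidence bound is pushed monotonically further below $p_{\sigma(i)}$ as its count is frozen while $t$ grows, or by establishing a one-sided confidence statement valid uniformly over all data-determined sub-sequences; resolving this cleanly, rather than the routine confidence-bound bookkeeping, is where the real care is needed.
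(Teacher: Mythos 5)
Your proposal follows the same skeleton as the paper's proof: the same good event $\mathcal{E}_1^c \cap \mathcal{E}_2^c$ with the same union bound, the observation that under this event the mode bin can never be ejected from $\mathcal{C}(t)$ (because its UCB stays at or above $p_1$ while no competitor's LCB can exceed $p_1$), and then the same contradiction at stopping as in Theorem~\ref{analysis_lem}. So in outline you and the paper agree. The difference is that the paper disposes of everything beyond the mode-bin-retention point with the single sentence ``the analysis remains same as discussed for QM1,'' whereas you correctly flag that this is not automatic: under QM2 a bin excluded from $\mathcal{C}(t)$ stops accumulating occurrences of its element, duplicate bins for the same element can be spawned, and consequently a bin's empirical mean and empirical variance are computed from an adaptively chosen subsequence of $\{Z^m_j\}$, to which \eqref{Eqn:confinterv} does not directly apply. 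This is a genuine subtlety that the paper's proof silently elides (its event $\mathcal{E}_1$ is defined for the full-sequence averages of \eqref{p_hat}, not for the bin counts the algorithm actually maintains), so your extra care is warranted rather than redundant.

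The deferred step you identify as the ``main obstacle'' does go through, and along the first of the two routes you suggest, though the right mechanism is convexity rather than monotonicity. For a $0/1$ sequence of length $t$ with $s$ ones, the empirical variance is $\frac{s(t-s)}{t(t-1)}$, so for fixed $t$ the lower confidence bound is a deterministic function $g(s) = \frac{s}{t} - \sqrt{\tfrac{2L}{t-1}}\sqrt{\tfrac{s}{t}\bigl(1-\tfrac{s}{t}\bigr)} - \tfrac{7L}{3(t-1)}$ of the count alone, with $L = \log(4kt^2/\delta)$. This $g$ is convex in $s$, hence maximized over $\{0,1,\dots,S_m(t)\}$ at an endpoint: at $s=0$ it is negative, and at $s=S_m(t)$ (the true occurrence count) the good event gives $g(S_m(t)) \le p_m < p_1$. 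So undercounting can never push a non-mode bin's LCB above $p_m$, which closes both the retention induction and the final contradiction. With that lemma added your argument is complete; as written, both your proposal and the paper leave it implicit, but yours at least names the issue.
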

\begin{proof}
The error analysis for Algorithm~\ref{alg2} is similar as that for Algorithm~\ref{alg1} as given in Section~\ref{analysis_qm1}. We consider the same two events $\mathcal{E}_1$ and $\mathcal{E}_2$. Recall that $\mathcal{E}_1$ denotes the error event that for some pair $(i,t)$ the confidence bound \eqref{Eqn:confinterv} around $\hat{p}_i^t$ is violated; $\mathcal{E}_2$ denotes the error event that for some pair $(i,t)$ the confidence bound \eqref{Eqn:confinterv_var} around the sample variance $V_t(Z^i)$ is violated. Again choosing similar values for $\delta_1$ and $\delta_2$, we get $\mathbb{P}[\mathcal{E}_1^c \cap \mathcal{E}_2^c] \geq 1 - \delta$. We now need to show that if the event $\mathcal{E}_1^c \cap \mathcal{E}_2^c$ is true, then Algorithm~\ref{alg2} returns $1$ as the mode. This then implies that Algorithm~\ref{alg2} is a $\delta$-true mode estimator. \\
The analysis remains same as discussed for QM1. The only additional factor we need to consider here, is the event that the bin corresponding to the mode $1$ of the underlying distribution is discarded in one of the rounds of the algorithm, i.e., it is not a part of $\mathcal{C}(t)$ for some round $t$. A bin is discarded when its UCB becomes less than the LCB of any other bin. Under event $\mathcal{E}_1^c$, all confidence intervals are true, and since $p_1>p_i, \forall i$ the UCB of the corresponding bin can never be less than the LCB of any other bin. This implies that under $\mathcal{E}_1^c$, the bin corresponding to the mode $1$ is never discarded. \\
Hence, the probability of returning $1$ as the mode is at least $\mathbb{P}[\mathcal{E}_1^c \cap \mathcal{E}_2^c]$, which by \eqref{union} implies that Algorithm~\ref{alg2} is a $\delta$-true mode estimator.
\end{proof}

\subsection{Query Complexity Upper bound}
From the analysis of the sample complexity for the QM1 model derived in Section~\ref{qm1_ub}, we get one upper bound for the QM2 case. Algorithm~\ref{alg1} continues for at most $T^*$ rounds with probability at least $1-\delta$, where $T^*$ is given by Theorem~\ref{lem_qm1_ub}. A sample accessed in each of these rounds can be compared to at most $\min \{ k,T^* \}$ other samples. Thus, a natural upper bound on the query complexity of Algorithm~\ref{alg2} is $(T^*\cdot \min \{ k,T^* \})$. The following result provides a tighter upper bound.
\begin{theorem} \label{lem_qm2_ub}
For a $\delta$-true mode estimator $\mathcal{A}_2$, corresponding to Algorithm~\ref{alg2}, we have the following with probability at least $(1-\delta)$.
\begin{multline*}
 {Q}_{\delta}^{\mathcal{P}}(\mathcal{A}_2) \leq \frac{592}{3} \frac{p_1}{(p_1-p_2)^2} \log \left( \frac{592}{3} \sqrt{\frac{k}{\delta}} \frac{p_1}{(p_1-p_2)^2} \right) \\ +  \sum_{i=2}^{T} \frac{592}{3} \frac{p_1}{(p_1-p_i)^2} \log \left( \frac{592}{3} \sqrt{\frac{k}{\delta}} \frac{p_1}{(p_1-p_i)^2} \right)
\end{multline*}
%
 for $T = \min \left \{ k, \frac{592}{3} \frac{p_1}{(p_1-p_2)^2} \log \left( \frac{592}{3} \sqrt{\frac{k}{\delta}} \frac{p_1}{(p_1-p_2)^2} \right) \right \}$.
\end{theorem}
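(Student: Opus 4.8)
The plan is to bound the total query count by the number of pairwise comparisons issued across all rounds and then charge those comparisons to individual bins. Since each round $t$ issues at most $|\mathcal{C}(t)|$ queries (a round terminates as soon as a positive match is found, or after exhausting $\mathcal{C}(t)$), the total query complexity is at most $\sum_{t=1}^{\tau}|\mathcal{C}(t)|$. Exchanging the order of summation,
\[
\sum_{t=1}^{\tau}|\mathcal{C}(t)| = \sum_{t=1}^{\tau}\sum_{\text{bins } i}\mathbf{1}[i\in\mathcal{C}(t)] = \sum_{\text{bins } i}\big(\text{number of rounds in which bin } i \text{ is active}\big),
\]
so it suffices to bound, for each bin, the number of rounds during which it lies in $\mathcal{C}(t)$. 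Throughout I would condition on the good event $\mathcal{E}_1^c\cap\mathcal{E}_2^c$ from Section~\ref{analysis_qm1}, which holds with probability at least $1-\delta$ and under which all confidence intervals are valid.

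Next I would control the number of rounds and hence the number of bins. Because bin $1$ is never discarded under $\mathcal{E}_1^c$ (its UCB dominates every other LCB, as shown in the analysis of Algorithm~\ref{alg2}), and because by round $T^*$ its LCB exceeds the UCB of every other element — the closest competitor being bin $2$ with the smallest gap $p_1-p_2$ — the stopping rule fires by round $T^*$; reusing the separation computation behind Theorem~\ref{lem_qm1_ub} with gap $p_1-p_2$ gives $\tau\le T^*$, where $T^*:=\frac{592}{3}\frac{p_1}{(p_1-p_2)^2}\log\!\big(\frac{592}{3}\sqrt{k/\delta}\,\frac{p_1}{(p_1-p_2)^2}\big)$. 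Since at most one new bin is created per round, the total number of bins is at most $\min\{k,\tau\}\le\min\{k,T^*\}=T$. Bin $1$ is active in every round, contributing at most $\tau\le T^*$, which is exactly the first term of the claimed bound.

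The crux is bounding the active lifetime of each non-mode bin. For a bin representing element $j\ge 2$, set $T_j^*:=\frac{592}{3}\frac{p_1}{(p_1-p_j)^2}\log\!\big(\frac{592}{3}\sqrt{k/\delta}\,\frac{p_1}{(p_1-p_j)^2}\big)$. A bin for element $j$ leaves $\mathcal{C}(t)$ as soon as some other bin's LCB exceeds its UCB, and in particular it is certainly discarded once bin $1$'s LCB exceeds bin $j$'s UCB. Rerunning the confidence-interval separation argument used to derive $T^*$ in Appendix~\ref{proof_qm1_ub}, but now for the pair $(1,j)$ with gap $p_1-p_j$, shows that this separation occurs by round $T_j^*$, so bin $j$ is active for at most $T_j^*$ rounds. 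I expect this to be the main obstacle, since it requires checking that the empirical-Bernstein separation bound of the QM1 analysis goes through verbatim for an arbitrary competing element $j$ rather than just element $2$, and that the elimination round of $j$ is governed by the gap $p_1-p_j$ and not by interactions with other bins (the comparison against bin $1$ supplies the clean worst-case elimination time).

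Finally I would assemble the pieces. There are at most $T-1$ non-mode bins, each representing a distinct element $j\in\{2,\ldots,k\}$ and contributing at most $T_j^*$ active rounds. Because $p_2\ge p_3\ge\cdots\ge p_k$, the gaps $p_1-p_j$ increase in $j$ and hence $T_j^*$ is nonincreasing in $j$; therefore the sum of the non-mode contributions over any set of at most $T-1$ distinct indices is maximized by the smallest indices $\{2,\ldots,T\}$, giving $\sum_{\text{non-mode bins}}(\text{active rounds})\le\sum_{j=2}^{T}T_j^*$. Adding bin $1$'s contribution $T^*$ yields
\[
Q_\delta^{\mathcal{P}}(\mathcal{A}_2)\le T^* + \sum_{j=2}^{T}T_j^*,
\]
which is precisely the stated bound; the entire argument holds on $\mathcal{E}_1^c\cap\mathcal{E}_2^c$, i.e.\ with probability at least $1-\delta$.
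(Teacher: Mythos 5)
Your proposal is correct and follows essentially the same route as the paper: condition on the good event $\mathcal{E}_1^c\cap\mathcal{E}_2^c$, charge each pairwise query to the bin it involves, bound the active lifetime of the bin for element $i$ by the empirical-Bernstein separation time $t_i^*$ with gap $p_1-p_i$ (and bin $1$'s lifetime by $T^*$), cap the number of bins by $T=\min\{k,T^*\}$, and sum. Your explicit exchange of summation and the monotonicity argument showing the worst case is attained by elements $\{2,\dots,T\}$ are details the paper leaves implicit, but the underlying argument is the same.
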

\begin{proof}
The detailed calculations are provided in Appendix~\ref{proof_qm1_ub}, here we give a sketch. Under the event $\mathcal{E}_1^c \cap \mathcal{E}_2^c$, where all confidence bounds hold true, for any bin represented during the run of the algorithm and corresponding to some element $i \neq 1$ from the support, we have that it will definitely be excluded from $\mathcal{C}(t)$ when its confidence bound $ \beta_{i}^t < \frac{p_1-p_{i}}{4}$ and $ \beta_1^t < \frac{p_1-p_{i}}{4}$. We find a $t$ which satisfies the stopping condition for each of the bins represented, and summing over them gives the total query complexity. 
Following the calculations in Appendix~\ref{proof_qm1_ub}, we get the following value of $t_i^*$ for the bin corresponding to element $i\neq 1$, such that by $t=t_i^*$ the bin will definitely be excluded from $\mathcal{C}(t)$.
\begin{equation*}
t_i^* = \frac{592}{3} \frac{p_1}{(p_1-p_i)^2} \log \left( \frac{592}{3} \sqrt{\frac{2k}{\delta}} \frac{p_1}{(p_1-p_i)^2} \right)
\end{equation*} 
Also, for the first bin we get $t_1^*$ as follows.
\begin{equation*}
t_1^* = \frac{592}{3} \frac{p_1}{(p_1-p_2)^2} \log \left( \frac{592}{3} \sqrt{\frac{2k}{\delta}} \frac{p_1}{(p_1-p_2)^2} \right)
\end{equation*}
Also, the total number of bins created will be at most \\ $ T = \min \left \{ k, \frac{592}{3} \frac{p_1}{(p_1-p_2)^2} \log \left( \frac{592}{3} \sqrt{\frac{2k}{\delta}} \frac{p_1}{(p_1-p_2)^2} \right) \right \}$. \\ \\ A sample from the bin corresponding to element $i$ will be involved in at most $t_i^*$ queries. Hence the total number of queries, ${Q}_{\delta}^{\mathcal{P}}(\mathcal{A})$, is bounded as follows
\begin{multline*}
    {Q}_{\delta}^{\mathcal{P}}(\mathcal{A}) \leq \frac{592}{3} \frac{p_1}{(p_1-p_2)^2} \log \left( \frac{592}{3} \sqrt{\frac{k}{\delta}} \frac{p_1}{(p_1-p_2)^2} \right) + \\ \sum_{i=2}^{T} \frac{592}{3} \frac{p_1}{(p_1-p_i)^2} \log \left( \frac{592}{3} \sqrt{\frac{k}{\delta}} \frac{p_1}{(p_1-p_i)^2} \right)
\end{multline*}
\end{proof}

\subsection{Query Complexity Lower bound}
The following theorem gives a lower bound on the expected query complexity for the QM2 model.
\begin{theorem}\label{lem_qm2_lb}
For any $\delta$-true mode estimator $\mathcal{A}$, we have,
\begin{equation*}
\mathbb{E}[{Q}_{\delta}^{\mathcal{P}}(\mathcal{A})] \geq \frac{p_1}{2(p_1-p_2)^2}\log (1/2.4\delta).
\end{equation*}
\end{theorem}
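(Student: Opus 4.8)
The plan is to reproduce the change-of-measure argument of Theorem~\ref{lem_qm1_lb}, but to carry out the information accounting at the level of individual pairwise queries rather than whole samples, since under QM2 the estimator's output $\hat m$ is a measurable function only of the sequence of $\pm 1$ oracle answers. I would fix the true distribution $\mathcal{P}$ (mode $1$) and an alternative $\mathcal{P}'$ whose mode differs from $1$, invoke the $\delta$-true property to get $\mathcal{P}(\hat m = 1) \ge 1-\delta$ and $\mathcal{P}'(\hat m = 1) \le \delta$, and then apply the data-processing inequality \cite{cover2012elements} followed by \cite[Remark 2]{kaufmann2016complexity} to the law of the answer transcript. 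This yields that the Kullback--Leibler divergence between the transcript laws under $\mathcal{P}$ and $\mathcal{P}'$ is at least $\log(1/2.4\delta)$, exactly as in the QM1 argument.

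The next step is to upper bound this transcript divergence by the expected number of queries times a per-query divergence. Writing $N$ for the random number of queries and $O_s$ for the $s$-th answer, the chain rule expresses the transcript divergence as $\mathbb{E}_{\mathcal{P}}\big[\sum_{s=1}^{N} d_s\big]$, where $d_s$ is the conditional KL divergence of $O_s$ given the earlier answers, and an optional-stopping (Wald) argument bounds this by $\mathbb{E}_{\mathcal{P}}[N]\cdot K_{\max}$ with $K_{\max}$ the largest divergence carried by any single comparison. I would then take the same $\mathcal{P}'$ as in Theorem~\ref{lem_qm1_lb}, perturbing only the top two masses to $p_1' = q - \epsilon$, $p_2' = q + \epsilon$ with $q = (p_1+p_2)/2$ and $p_i' = p_i$ for $i > 2$, so that only comparisons against the bins of elements $1$ and $2$ are informative. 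Each such comparison is a single Bernoulli observation, so \cite[Theorem 1.4]{popescu2016bounds} gives, as $\epsilon \to 0$, a bound of the form $(p_1-p_2)^2/\big(4\,q(1-q)\big)$ on its divergence. Using $q \ge p_1/2$ together with the crude estimate $1 - q \ge 1/4$ gives $q(1-q) \ge p_1/8$, whence $K_{\max} \le 2(p_1-p_2)^2/p_1$. Substituting this and rearranging $\log(1/2.4\delta) \le \mathbb{E}_{\mathcal{P}}[N]\,K_{\max}$ produces $\mathbb{E}_{\mathcal{P}}[N] \ge \frac{p_1}{2(p_1-p_2)^2}\log(1/2.4\delta)$, which is the claim; the factor of $2$ lost relative to QM1 is exactly the slack in this last lower bound on $q(1-q)$ and reflects that each query now returns only a single comparison bit.

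The main obstacle is making the step $\sum_{s} d_s \le N\,K_{\max}$ rigorous, because several queries inside a single round compare the \emph{same} fresh sample $X_t$ against different bins. Consequently the later comparisons are conditioned on earlier negative answers, and such a conditional Bernoulli divergence can strictly exceed its unconditional value, so one cannot simply identify $K_{\max}$ with the unconditional single-comparison divergence computed above. The safe way to handle this is to bound the divergence accumulated \emph{within} each round by the full sample divergence $D(\mathcal{P}\|\mathcal{P}')$, using the data-processing inequality applied to the deterministic map from $X_t$ to that round's answers, and to verify that no admissible conditioning event inflates the per-comparison divergence beyond the stated $K_{\max}$; one must also discharge the optional-stopping step by checking integrability of the log-likelihood-ratio increments so that Wald's identity applies to the stopping time $N$. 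Controlling this within-round dependence cleanly, rather than the KL estimates themselves, is where the real care is needed.
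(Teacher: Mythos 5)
Your route is genuinely different from the paper's, which disposes of this theorem by a two-line reduction: given any $\delta$-true QM2 estimator making $\tau$ pairwise queries in expectation, simulate it under QM1 by querying the values of the (at most two) sample indices appearing in each pairwise query; this produces a $\delta$-true QM1 estimator with expected query complexity at most $2\tau$, so $2\tau \geq \frac{p_1}{(p_1-p_2)^2}\log(1/2.4\delta)$ by Theorem~\ref{lem_qm1_lb} and the claim follows. Your direct transcript-KL accounting is an interesting alternative, but as written it rests on a step that fails, and the verification you propose cannot succeed. The bound $\sum_s d_s \leq N\,K_{\max}$ with $K_{\max}\leq 2(p_1-p_2)^2/p_1$ is false: conditioning on earlier negative answers within a round renormalizes the Bernoulli being tested and inflates its divergence. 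Concretely, if a round first compares the fresh sample against every bin $j\geq 3$ (these answers are uninformative since $p_j'=p_j$, but they condition what follows) and all answers are $-1$, the remaining comparison against bin $1$ tests $Ber\bigl(p_1/(p_1+p_2)\bigr)$ against $Ber\bigl(1/2+O(\epsilon)\bigr)$, whose divergence is of order $(p_1-p_2)^2/(p_1+p_2)^2$; for $p_1=0.01$, $p_2=0.005$ this is roughly $0.056$, more than ten times your claimed $K_{\max}=2(p_1-p_2)^2/p_1=0.005$. So there is no hope of "verifying that no admissible conditioning event inflates the per-comparison divergence beyond the stated $K_{\max}$" --- such events exist, and an adversarial ordering of comparisons within a round realizes them.

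The per-round fallback you mention is the right repair, and it in fact yields the theorem directly (with a better constant): conditionally on the past, all answers in round $t$ are a deterministic function of the single fresh sample $X_t$, so by data processing the divergence accumulated in any round is at most $D(\mathcal{P}\Vert\mathcal{P}')\leq (p_1-p_2)^2/p_1$ by \eqref{dpp_lb}, a query-free round contributes nothing, and hence the expected number of query-making rounds --- a lower bound on the expected number of queries --- is at least $\frac{p_1}{(p_1-p_2)^2}\log(1/2.4\delta)$. But in that argument the per-query constant $K_{\max}$ and the factor-of-two story built on $q(1-q)\geq p_1/8$ play no role; the proof you would actually complete is the rounds-based one, not the per-query one you outline. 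As submitted, the proposal hinges on an accounting step that a concrete example defeats, so it has a genuine gap.
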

\begin{proof}
Consider any $\delta$-true mode estimator $\mathcal{A}$ under query model QM2 and let $\tau$ denote its average (pairwise) query complexity when the underlying distribution is $\mathcal{P}$. Next, consider the QM1 query model and let estimator $\mathcal{A}^{'}$ simply simulate the estimator $\mathcal{A}$ under QM2, by querying the values of any sample indices involved in the pairwise queries. It is easy to see that since $\mathcal{A}$ is a $\delta$-true mode estimator under QM2, the same will be true for $\mathcal{A}^{'}$ as well under QM1. Furthermore, the expected query complexity of $\mathcal{A}^{'}$ under QM1 will be at most $2\tau$ since each pairwise query involves two sample indices. Thus, if the query complexity $\tau$ of $\mathcal{A}$ under QM2 is less than $\frac{p_1}{2(p_1-p_2)^2}\log (1/2.4\delta)$, then we have a $\delta$-true mode estimator under query model QM1 with query complexity less than $\frac{p_1}{(p_1-p_2)^2}\log (1/2.4\delta)$, which contradicts the lower bound in Theorem~\ref{lem_qm1_lb}. The result then follows. 
\end{proof}
\noindent The lower bound on the query complexity as given by the above theorem matches the first term of the upper bound given in Theorem~\ref{lem_qm2_ub}. So, the lower bound will be close to the upper bound when the first term dominates, in particular when $\frac{p_1}{(p_1-p_2)^2} \gg \sum_{i=3}^k \frac{p_1}{(p_1-p_i)^2}$. \\

While the above lower bound matches the upper bound in a certain restricted regime, we would like a sharper lower bound which works more generally. Towards this goal, we consider a slightly altered (and relaxed) problem setting which relates to the problem of best arm identification in a multi-armed bandit (MAB) setting studied in \cite{kaufmann2016complexity}, \cite{soare2014best}. The altered setting and the corresponding result are discussed in Appendix~\ref{proof_multi_armed_lb}.

\section{Experimental Results}\label{sec:exp}
For both the QM1 and QM2 models, we simulate Algorithm~\ref{alg1} and Algorithm~\ref{alg2} for various synthetic distributions. We take $k=5120$ and keep the difference  $p_1 - p_2 = 0.208$ constant for each distribution. For the other $p_i$'s we follow two different models:
\begin{enumerate}
\item Uniform distribution : The other $p_i$'s for $i=3....k$ are chosen such that each $p_i = \frac{1-p_1-p_2}{k-2}$.
\item Geometric distribution : The other $p_i$'s are chosen such that $p_2,p_3....p_k$ form a decreasing geometric distribution which sums upto $1-p_1$.
\end{enumerate}
For each distribution we run the experiment 50 times and take an average to plot the query complexity. In Fig.~\ref{fig:fig1} we plot the number of queries taken by Algorithm~\ref{alg1} for QM1, for both the geometric and uniform distributions. As suggested by our theoretical results, the query complexity increases (almost) linearly with $p_1$ for a fixed $(p_1 - p_2)$. In Fig.~\ref{fig:fig2} we plot the number of queries taken by Algorithm~\ref{alg2} for QM2 and compare it to the number of queries taken by a naive algorithm which queries a sample with all the bins formed, for both the uniform and geometric distributions.  \\
To further see how Algorithm~\ref{alg2} performs better than the naive algorithm which queries a sample against all bins formed, we plot the number of queries in each round for a particular geometric distribution, in Fig.~\ref{fig:fig3}. We observe that over the rounds, for Algorithm~\ref{alg2} bins start getting discarded and hence the number of queries per round decreases, while for the naive algorithm, as more and more bins are formed, the number of queries per round keeps increasing.\\
\begin{figure} 
\centering\includegraphics[width=.95\columnwidth]{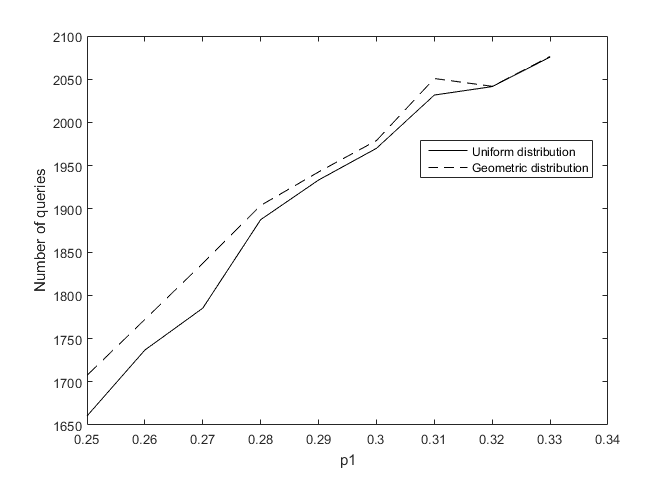}
\caption{Number of queries for Algorithm~\ref{alg1} under the uniform and geometric distributions.}\label{fig:fig1}
\end{figure}

\begin{figure} 
\centering\includegraphics[width=.95\columnwidth]{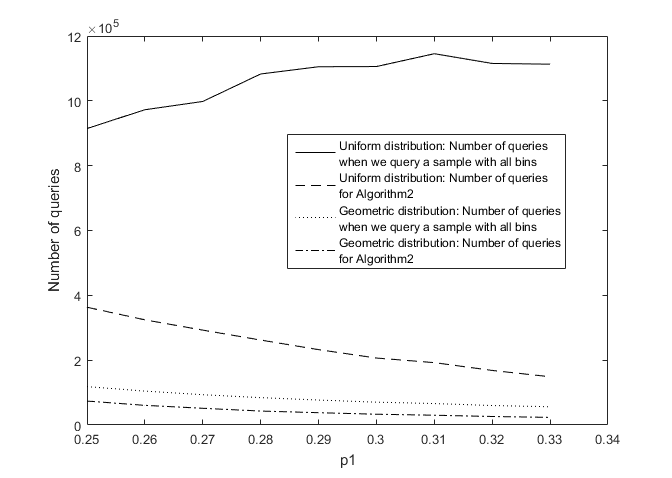}
\caption{Number of queries when each sample is queried with all the bins, and number of queries for Algorithm~\ref{alg2} for the uniform and geometric distributions.}\label{fig:fig2}
\end{figure}

\begin{figure} 
\centering\includegraphics[width=.95\columnwidth]{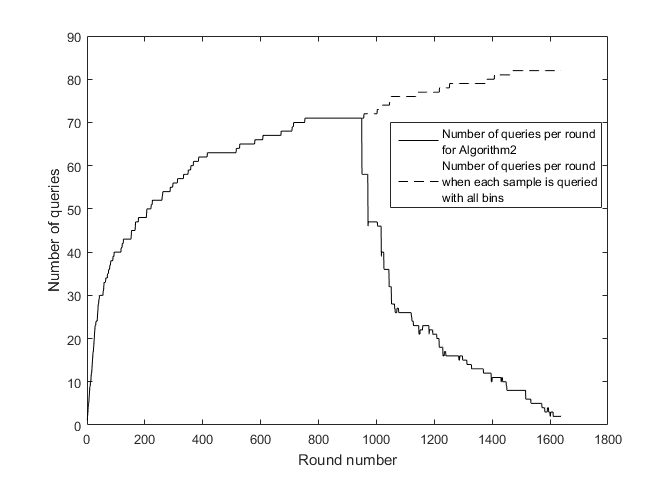}
\caption{Number of queries per round for Algorithm~\ref{alg2} and for the algorithm which queries each sample with all of the bins formed.}\label{fig:fig3}
\end{figure}

\textbf{Real world dataset:} As mentioned in the introduction, one of the applications of mode estimation is partial clustering. Via experiments on a real-world purchase data set \cite{snapnets}, we were able to benchmark the performance of our proposed Algorithm~\ref{alg2} for pairwise queries, a naive variant of it with no UCB-based bin elimination and the algorithm of \cite{mazumdar2017clustering} which performs a full clustering of the nodes. Using \cite{snapnets}, we create a dataset where each entry represents an item available on Amazon.com along with a label corresponding to a category that the item belongs to. We take entries with a common label to represent a cluster and we consider the task of using pairwise oracle queries to identify a cluster of items within the top-10 clusters, from among the top-100 clusters in the entire dataset. Note that while our algorithm is tailored towards finding a heavy cluster, the  algorithm in \cite{mazumdar2017clustering} proceeds by first learning the entire clustering and then identifying a large cluster. Some statistics of the dataset over which we ran our algorithm are: number of clusters - 100; number of nodes - 291925; size of largest cluster - 53551; and size of 11th largest cluster - 19390.

With a target confidence of 99\%, our proposed algorithm terminates in $\sim 631k$ pairwise queries while the naive algorithm takes $\sim 1474k$ queries ($2.3x$ more). The algorithm of \cite{mazumdar2017clustering} clusters all nodes first, and is thus expected to take around $n*k = 29192.5k$ queries ($46x$ more) to terminate; this was larger than our computational budget and hence could not be run successfully. With a target confidence of 95\%, our algorithm takes $\sim 558k$ queries instead of the naive version which takes $\sim 1160k$ queries ($2x$ more) to terminate.

\section{Discussion} \label{discussion}
There are a few variations of the standard mode estimation problem which have important practical applications and we discuss them in the following subsections. 

\subsection{Top-$m$ estimation}
An extension of the mode estimation problem discussed could be estimating the top-$m$ values. i.e. for $p_1>p_2>....>p_k$, an algorithm should return the set $\{1,2,...m \}$. A possible application is in clustering, where we are interested in finding the largest $m$ clusters. The algorithms \ref{alg1} and \ref{alg2} for QM1 and QM2 would change only in the stopping rule. The new stopping rule would be such that $A_t =$ \textit{(stop,.)} when there exist $m$ bins such that their LCB is greater than the UCB of the other remaining bins. In this setting, we define a $\delta$-true top-$m$ estimator as an algorithm which returns the set $\{1,2,....m \}$ with probability at least $(1-\delta)$. In the following results we give bounds on the query complexity, ${Q}_{\delta}^{\mathcal{P}}(\mathcal{A})$ for a $\delta$-true top-$m$ estimator $\mathcal{A}$, for the QM1 query model.

\begin{theorem} 
For a $\delta$-true top-$m$ estimator $\mathcal{A}_m$, corresponding to Algorithm~\ref{alg1} for the top-$m$ case, we have the following with probability at least $(1-\delta)$.
\begin{align*}
&Q_{\delta}^{\mathcal{P}}(\mathcal{A}_m) \leq \\ & \max_{\substack{i\in \{1,2....m \} \\ j\in \{m+1,....k \}}} \frac{592}{3} \frac{p_i}{(p_i-p_j)^2} \log \left( \frac{592}{3} \sqrt{\frac{k}{\delta}} \frac{p_i}{(p_i-p_j)^2} \right)
\end{align*}
\end{theorem}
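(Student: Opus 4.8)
The plan is to follow the template of Theorem~\ref{lem_qm1_ub} almost verbatim, changing only the event that triggers stopping. First I would condition on the good event $\mathcal{E}_1^c \cap \mathcal{E}_2^c$, which by \eqref{union} has probability at least $1-\delta$ and on which every confidence interval $[\hat{p}_i^t - \beta_i^t,\, \hat{p}_i^t + \beta_i^t]$ from \eqref{Eqn:confinterv} contains the corresponding $p_i$ for all pairs $(i,t)$. On this event a correctness argument for the top-$m$ stopping rule (mirroring the $\delta$-true estimator proofs) shows that the only set of $m$ bins that can ever satisfy the rule is the true top-$m$ set $\{1,\ldots,m\}$: if some set $S$ of size $m$ with $S \neq \{1,\ldots,m\}$ triggered stopping, there would be $i \le m$ with $i \notin S$ and $j > m$ with $j \in S$, forcing $\hat{p}_j^t - \beta_j^t > \hat{p}_i^t + \beta_i^t$ and hence $p_j > p_i$, contradicting $p_1 > p_2 > \cdots > p_k$. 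Therefore the stopping time equals the first round $t$ at which the LCB of each bin $i \in \{1,\ldots,m\}$ exceeds the UCB of each bin $j \in \{m+1,\ldots,k\}$.

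Next I would reduce this joint condition to a collection of pairwise separation requirements. For a fixed pair $(i,j)$ with $i \le m < j$, the stopping inequality $\hat{p}_i^t - \beta_i^t > \hat{p}_j^t + \beta_j^t$ is, on the good event, implied by $\beta_i^t < (p_i - p_j)/4$ and $\beta_j^t < (p_i-p_j)/4$, which is exactly the sufficient condition used in Appendix~\ref{proof_qm1_ub} for the top-$1$ case with the gap $p_1 - p_j$ replaced by $p_i - p_j$. I can therefore quote that computation directly: bounding the empirical variance $V_t(Z^i)$ by its mean proxy $p_i(1-p_i) \le p_i$ via \eqref{Eqn:confinterv_var} on $\mathcal{E}_2^c$, there is a threshold
\[
t_{ij}^* = \frac{592}{3}\,\frac{p_i}{(p_i-p_j)^2}\,\log\!\left(\frac{592}{3}\sqrt{\frac{k}{\delta}}\,\frac{p_i}{(p_i-p_j)^2}\right)
\]
beyond which both confidence widths drop below $(p_i-p_j)/4$, so the pair $(i,j)$ has separated. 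Since $p_i \ge p_j$ for $i \le m < j$, the larger variance proxy $p_i$ controls the slower of the two confidence intervals, which is precisely why only $p_i$ (and not $p_j$) appears in $t_{ij}^*$.

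Finally, because in QM1 a single query at time $t$ refreshes $\hat{p}_i^t$ for every bin simultaneously, all pairwise separations are driven by the same sample stream rather than by disjoint queries; consequently the algorithm stops once the slowest pair has separated, i.e.\ at time $\max_{i \le m,\, j > m} t_{ij}^*$, and never later. This is the crucial structural contrast with QM2 (Theorem~\ref{lem_qm2_ub}), where separate queries per bin force a sum rather than a max. Since one query is issued per round, the query complexity is bounded by this maximal threshold, which is exactly the claimed bound. The step I expect to require the most care is verifying that no spurious set of $m$ bins can trigger the stopping rule before $\{1,\ldots,m\}$ does, and in particular handling bins $j > m$ that have not yet appeared, for which $\hat{p}_j^t = 0$; on $\mathcal{E}_1^c$ such a bin necessarily satisfies $p_j \le \beta_j^t$, so it cannot masquerade as a top bin, and the correctness argument above rules out any spurious stopping set regardless, while its threshold $t_{ij}^*$ remains a valid entry of the maximum.
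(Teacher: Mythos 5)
Your proposal is correct and follows essentially the same route as the paper: condition on the good event $\mathcal{E}_1^c \cap \mathcal{E}_2^c$, reduce stopping to the pairwise separation condition $\beta_i^t < (p_i-p_j)/4$ and $\beta_j^t < (p_i-p_j)/4$ for $i \le m < j$, reuse the Appendix~\ref{proof_qm1_ub} calculation with the gap $p_1 - p_j$ replaced by $p_i - p_j$, and take the maximum over pairs since a single QM1 query updates all bins simultaneously. The paper's proof is only a two-line sketch, so your added details on correctness of the top-$m$ stopping rule and on unseen bins are a sound (and welcome) elaboration of the same argument rather than a departure from it.
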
 
\begin{proof}
The proof follows along the same lines as Theorem~\ref{lem_qm1_ub}. Here, for a bin $i\in \{ 1,2,...m\}$ and a bin $j\in \{m+1,....k \}$, their confidence bounds would be separated when $\beta_i^t < \frac{p_i-p_j}{4}$ and $\beta_j^t < \frac{p_i-p_j}{4}$. The calculations then follow similarly as before to give the above upper bound.
\end{proof}

\begin{theorem}
For any $\delta$-true top-$m$ estimator $\mathcal{A}$, we have,
\begin{equation*}\label{top_m_lb}
\mathbb{E}[{Q}_{\delta}^{\mathcal{P}}(\mathcal{A})] \geq \max_{\substack{i\in \{1,2....m \} \\ j\in \{m+1,....k \}}} \frac{p_i}{(p_i-p_j)^2} \log \left(1/2.4\delta\right)
\end{equation*}
\end{theorem}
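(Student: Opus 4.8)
The plan is to adapt the change-of-measure argument of Theorem~\ref{lem_qm1_lb} to the top-$m$ setting, establishing the bound separately for each pair $(i,j)$ with $i \in \{1,\ldots,m\}$ and $j \in \{m+1,\ldots,k\}$ and then taking the maximum. Fix such a pair and write $q = (p_i + p_j)/2$. I would construct an alternative distribution $\mathcal{P}'$ that agrees with $\mathcal{P}$ on every coordinate except $i$ and $j$, setting $p_i' = q - \epsilon$ and $p_j' = q + \epsilon$ for an arbitrarily small $\epsilon > 0$. This is precisely the two-coordinate perturbation used in the mode case, now applied across the top-$m$ boundary rather than across the boundary between the mode and the runner-up.

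The first key step is to verify that $\mathcal{P}'$ has a genuinely different set of top-$m$ elements, so that any $\delta$-true top-$m$ estimator must behave differently under $\mathcal{P}$ and $\mathcal{P}'$. I would argue this directly from the definition of top-$m$: since $p_j' = q + \epsilon > q - \epsilon = p_i'$, it is impossible for element $i$ to belong to the top-$m$ set of $\mathcal{P}'$ while element $j$ lies outside it, because an in-set element can never carry strictly smaller mass than an out-set element. Hence either $i$ leaves or $j$ enters the top-$m$ set of $\mathcal{P}'$, and in either case the correct answer for $\mathcal{P}'$ is not $\{1,\ldots,m\}$. Letting $\hat{S}$ denote the estimator's output set, the $\delta$-true guarantee then yields $\mathbb{P}_\mathcal{P}[\hat{S} = \{1,\ldots,m\}] \geq 1-\delta$ and $\mathbb{P}_{\mathcal{P}'}[\hat{S} = \{1,\ldots,m\}] \leq \delta$, the exact analogue of \eqref{delta_p}.

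From here the argument is identical in form to Theorem~\ref{lem_qm1_lb}. Wald's identity gives $\mathbb{E}_\mathcal{P}[\sum_{t=1}^{\tau} \log(p(X_t)/p'(X_t))] = \mathbb{E}_\mathcal{P}[\tau]\, D(\mathcal{P}||\mathcal{P}')$, and combining the data-processing inequality applied to the event $\{\hat{S} = \{1,\ldots,m\}\}$ with \cite[Remark 2]{kaufmann2016complexity} gives $\mathbb{E}_\mathcal{P}[\tau] \geq \log(1/2.4\delta)/D(\mathcal{P}||\mathcal{P}')$. Since $\mathcal{P}$ and $\mathcal{P}'$ differ only on coordinates $i$ and $j$, the divergence equals $D(\mathcal{P}||\mathcal{P}') = p_i \log(p_i/(q-\epsilon)) + p_j \log(p_j/(q+\epsilon))$, which is exactly the expression bounded in \eqref{dpp_lb} with the indices $1,2$ replaced by $i,j$. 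Factoring out $(p_i+p_j)$, invoking the KL upper bound of \cite[Theorem 1.4]{popescu2016bounds}, and letting $\epsilon \to 0$ yields $D(\mathcal{P}||\mathcal{P}') \leq (p_i - p_j)^2/p_i$. Substituting gives $\mathbb{E}_\mathcal{P}[\tau] \geq \frac{p_i}{(p_i-p_j)^2}\log(1/2.4\delta)$, and since this holds for every admissible pair, I would maximize over $i \in \{1,\ldots,m\}$ and $j \in \{m+1,\ldots,k\}$ to obtain the claim.

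I expect the only genuinely new ingredient, and hence the main obstacle, to be the verification in the second step that the perturbation changes the correct top-$m$ set for \emph{every} pair $(i,j)$, not merely the critical pair $(p_m, p_{m+1})$; the slick resolution is the in-set/out-set mass comparison above, which sidesteps having to track the exact ranking of the intermediate elements. Everything else is a direct transcription of the mode-case computation, with the scalar output $\hat{m}$ replaced by the set-valued output $\hat{S}$ throughout.
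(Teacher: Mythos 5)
Your proposal is correct and follows essentially the same route as the paper: the paper's proof is a two-line sketch that constructs exactly the same two-coordinate perturbation $p_i' = \tfrac{p_i+p_j}{2}-\epsilon$, $p_j' = \tfrac{p_i+p_j}{2}+\epsilon$ and invokes the Theorem~3 machinery, then maximizes over the pairs $(i,j)$. The only difference is that you explicitly verify that the perturbation invalidates $\{1,\ldots,m\}$ as the answer under $\mathcal{P}'$ (via the in-set/out-set mass comparison), a step the paper leaves implicit; this is a correct and worthwhile addition rather than a deviation.
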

\begin{proof}
The proof follows along the same lines as Theorem~\ref{lem_qm1_lb}. Here, for $\epsilon>0$, the alternate distribution $\mathcal{P}'$ that we choose would have $p_i'=\frac{p_i+p_j}{2} - \epsilon$  and $p_j'=\frac{p_i+p_j}{2} + \epsilon$ for some $i\in \{1,2....m \}$ and some $j\in \{m+1,....k \}$. We take the maximum value over all such $i,j$ to give the lower bound. 
\end{proof}

\subsection{Noisy oracle}
Here, we consider a setting where the oracle answers queries noisily and we analyze the impact of errors on the query complexity for mode estimation. \\
For the noisy QM1 model, we assume that when we query the oracle with some index $j$, the value revealed is the true sample value $X_j$ with probability $(1- p_e)$ and any of the $k$ values in the support with probability $\frac{p_e}{k}$ each. The problem of mode estimation in this noisy setting is equivalent to that in a noiseless setting where the underlying distribution is given by 
$$
p_i'= (1-p_e)p_i + \frac{p_e}{k}.
$$
Since the mode of this altered distribution is the same as the true distribution, we can use Algorithm~\ref{alg1} for mode estimation under the noisy QM1 model, and the corresponding query complexity bounds in Section~\ref{sec_QM1} hold true.\\
For the noisy QM2 model, we assume that for any pair of indices $i$ and $j$ sent to the oracle, it returns the correct answer ($+1$ if $X_i = X_j$, $- 1$ otherwise) with probability $(1 - p_e)$. This setting is technically more involved and is in fact similar to clustering using pairwise noisy queries \cite{mazumdar2017clustering}. The mode estimation problem in this setting corresponds to identifying the largest cluster, which has been studied in \cite{Shah1902:Top}. Deriving tight bounds for this case is still open. 

\bibliography{bibfile}
\bibliographystyle{aaai}

\appendix

\section{Empirical Bernstein result} \label{emp_bern}
We use the following result given in \cite{maurer2009empirical} to get the confidence bounds. Since we do not know the underlying distribution and it's variance, we cannot use the standard Bernstein bound. The following result gives us a handle on the confidence bounds, using empirical variance. 

\begin{theorem} \label{bern_thm}
Let $X_1,X_2,...X_t$ be i.i.d. random variables with values in $[0,1]$ and let $\delta>0$. Then with probability at least $1-\delta$ in the i.i.d. vector $\mathbf{X}=(X_1,X_2....X_t)$ we have
\begin{equation*}
\Bigl\lvert \mathbb{E}X - \frac{1}{t}\sum_{i=1}^t X_i \Bigr\rvert  \leq \sqrt{\frac{2V_t(\mathbf{X})\log (2/\delta)}{t}} + \frac{7 \log (2/\delta)}{3(t-1)},
\end{equation*}
where $V_t(\mathbf{X})$ is the sample variance 
\begin{equation*}
V_t(\mathbf{X}) = \frac{1}{t(t-1)}\sum_{1\leq i<j\leq t} (X_i-X_j)^2.
\end{equation*}
\end{theorem}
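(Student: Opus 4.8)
The plan is to derive the empirical Bernstein bound from two separate concentration statements—a classical Bernstein inequality phrased in terms of the \emph{true} variance, and a concentration inequality for the empirical standard deviation that lets us replace the unknown true variance by the observable sample variance—and then to combine the two tail events by a union bound and merge the resulting correction terms with elementary algebra.

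\textbf{Step 1 (true-variance Bernstein).} Write $\mu = \mathbb{E}X$, $\sigma^2 = \mathrm{Var}(X)$ and $\bar X = \tfrac1t\sum_{i=1}^t X_i$. Since the $X_i$ take values in $[0,1]$, the classical Bernstein inequality gives, for every $\epsilon > 0$,
\[
\Pr\!\left[\mu - \bar X \ge \epsilon\right] \le \exp\!\left(-\frac{t\,\epsilon^2}{2\sigma^2 + \tfrac{2}{3}\epsilon}\right).
\]
Setting the right-hand side to $\delta/2$ and solving the resulting quadratic for $\epsilon$ (using $\sqrt{a+b}\le\sqrt a+\sqrt b$) yields, with probability at least $1-\delta/2$,
\[
\mu - \bar X \le \sqrt{\frac{2\sigma^2\log(2/\delta)}{t}} + \frac{2\log(2/\delta)}{3t}.
\]

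\textbf{Step 2 (replacing $\sigma$ by $\sqrt{V_t}$).} The obstacle is that $\sigma$ is unknown, so I would bound it by the sample variance. The key observation is that, by the identity $(t-1)V_t(\mathbf X) = \sum_{i=1}^t (X_i-\bar X)^2$, the quantity $\sqrt{(t-1)V_t(\mathbf X)} = \lVert \mathbf X - \bar X\mathbf 1\rVert_2$ is a $1$-Lipschitz function of $\mathbf X \in [0,1]^t$ (the recentring map is an orthogonal projection, hence has operator norm at most $1$), while $\mathbb{E}\sqrt{V_t} \le \sqrt{\mathbb{E}V_t} = \sigma$ by Jensen's inequality and unbiasedness of $V_t$. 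A Lipschitz / self-bounding concentration argument for this function then supplies a one-sided deviation bound for the empirical standard deviation: with probability at least $1-\delta/2$,
\[
\sigma \le \sqrt{V_t(\mathbf X)} + \sqrt{\frac{2\log(2/\delta)}{t-1}}.
\]
This is precisely the companion variance-concentration estimate of \cite{maurer2009empirical}.

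\textbf{Step 3 (combine) and the main obstacle.} On the intersection of the two events, which by a union bound has probability at least $1-\delta$, I substitute the Step~2 bound into Step~1, obtaining
\[
\sqrt{\frac{2\sigma^2\log(2/\delta)}{t}} \le \sqrt{\frac{2V_t\log(2/\delta)}{t}} + \frac{2\log(2/\delta)}{\sqrt{t(t-1)}}.
\]
Adding back the lower-order term $\tfrac{2}{3t}\log(2/\delta)$ and using $\sqrt{t(t-1)}\ge t-1$ and $t\ge t-1$, the two correction terms collapse into a single term bounded by $\tfrac{7}{3(t-1)}\log(2/\delta)$, which gives the claimed one-sided inequality; the two-sided statement follows by the identical argument applied to $\bar X - \mu$. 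I expect the delicate part to be Step~2: showing that $\sqrt{V_t}$ concentrates around $\sigma$ at the rate $\sqrt{\log(1/\delta)/(t-1)}$ is not a direct Chernoff computation but rests on the Lipschitz/self-bounding structure of the sample standard deviation, and pinning down the exact constant $7/3$ requires careful bookkeeping of the $\delta/2+\delta/2$ split together with the crude bounds used to merge the two lower-order terms.
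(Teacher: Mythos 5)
The paper does not actually prove this statement: it is quoted verbatim from Maurer and Pontil (2009) and used as a black box, with the companion variance-concentration estimate stated separately as Theorem~\ref{var}. So there is no in-paper argument to compare against; your outline is, in substance, the original Maurer--Pontil derivation (true-variance Bernstein, plus concentration of the sample standard deviation, plus a union bound), and Steps~1--3 are the right skeleton. Your Step~2 is acceptable as a citation of Theorem~\ref{var}, though I would flag that the Lipschitz/Jensen heuristic you give does not by itself yield the stated one-sided bound: Jensen gives $\mathbb{E}\sqrt{V_t}\le\sqrt{\mathbb{E}V_t}=\sigma$, which points the wrong way for bounding $\sigma$ \emph{above} by $\sqrt{V_t}$ plus a deviation term; the actual proof rests on the self-bounding property of $t(t-1)V_t$ and the entropy method.

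The genuine gap is in the constant. Your Step~1, obtained by inverting the exponential Bernstein tail via the quadratic formula and $\sqrt{a+b}\le\sqrt a+\sqrt b$, produces the linear term $\frac{2\log(2/\delta)}{3t}$. Carrying that through Step~3, the two correction terms are $\frac{2L}{\sqrt{t(t-1)}}+\frac{2L}{3t}$ with $L=\log(2/\delta)$, and these bound by $\frac{8L}{3(t-1)}$, not $\frac{7L}{3(t-1)}$. The claimed collapse to $7/3$ is in fact false for your intermediate bound: the inequality $\frac{2}{\sqrt{t(t-1)}}+\frac{2}{3t}\le\frac{7}{3(t-1)}$ reduces to $11t^2-56t-4\le 0$, which fails for all $t\ge 6$. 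To recover the stated constant you need the sharper Bennett-form inequality
\[
\mu-\bar X \;\le\; \sqrt{\frac{2\sigma^2\log(2/\delta)}{t}}+\frac{\log(2/\delta)}{3t}
\]
(with coefficient $1/3$ on the linear term, which follows from Bennett's inequality via $h(u)\ge\frac{u^2}{2(1+u/3)}$ rather than from the crude root estimate); then $2+\tfrac13=\tfrac73$ and the merge goes through exactly as you describe. As written, your argument proves the theorem only with $7/3$ replaced by $8/3$, which would still suffice for every use the paper makes of the bound, but does not establish the statement as given.
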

Also, the following result also given in \cite{maurer2009empirical} is used to establish bounds on the sample variance.
\begin{theorem} \label{var}
Let $t\geq 2$ and $\mathbf{X}=(X_1,X_2....X_t)$ be a vector of independent random variables with values in $[0,1]$. Then for $\delta>0$ and $\mathbb{E}V_t = \mathbb{E}_\mathbf{X} V_t(\mathbf{X})$ we have, 
\begin{align*}
\text{Pr}\left\{ \sqrt{\mathbb{E}V_t} > \sqrt{V_t(\mathbf{X})} + \sqrt{\frac{2 \log 1/\delta_1}{t-1}} \right \} \leq \delta \\
\text{Pr}\left\{ \sqrt{V_t(\mathbf{X})} > \sqrt{\mathbb{E}V_t} + \sqrt{\frac{2 \log 1/\delta_1}{t-1}} \right \} \leq \delta
\end{align*}
\end{theorem}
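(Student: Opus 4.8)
The plan is to reduce the statement to a concentration inequality for a convex, Euclidean--Lipschitz function of the independent coordinates, and then to carefully track the distinction between $\mathbb{E}\sqrt{V_t}$ and $\sqrt{\mathbb{E}V_t}$. The starting point is the algebraic identity $(t-1)\,V_t(\mathbf{X}) = \sum_{i=1}^t (X_i-\bar X)^2 = \lVert P\mathbf{X}\rVert_2^2$, where $\bar X = t^{-1}\sum_i X_i$ and $P = I - t^{-1}\mathbf{1}\mathbf{1}^\top$ is the orthogonal projection onto the subspace orthogonal to $\mathbf{1}$. Writing $g(\mathbf{X}) := \sqrt{(t-1)V_t(\mathbf{X})} = \lVert P\mathbf{X}\rVert_2$, two structural facts are immediate: $g$ is convex, being a norm composed with a linear map; and $g$ is $1$-Lipschitz for the Euclidean metric, since $\lvert g(\mathbf{x})-g(\mathbf{y})\rvert \le \lVert P(\mathbf{x}-\mathbf{y})\rVert_2 \le \lVert \mathbf{x}-\mathbf{y}\rVert_2$ because $P$ is a projection (operator norm $1$).

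First I would apply a concentration inequality for a convex $1$-Lipschitz function of independent variables confined to the bounded cube $[0,1]^t$ --- either Talagrand's convex--distance inequality or, to pin down the sharp constant, the entropy (log--Sobolev) method / exponential Efron--Stein bound. The essential point is that such a bound is \emph{variance-sensitive}: it gives sub-Gaussian deviations of $g$ from its center with variance proxy equal to the squared Lipschitz constant, of the form $\Pr\{g - \mathbb{E}g > s\} \le \exp(-s^2/2)$, rather than the much weaker $t$-independent estimate that a crude bounded-differences (McDiarmid) argument would produce here. Rescaling by $1/\sqrt{t-1}$ and solving for the threshold converts the $O(\sqrt{\log(1/\delta)})$ deviation of $g$ into a deviation of $\sqrt{V_t} = g/\sqrt{t-1}$ of order $\sqrt{2\log(1/\delta)/(t-1)}$, which is exactly the rate in the statement.

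The last step is to center the two-sided bound at $\sqrt{\mathbb{E}V_t} = \sqrt{\mathbb{E}g^2/(t-1)}$ rather than at $\mathbb{E}\sqrt{V_t} = \mathbb{E}g/\sqrt{t-1}$, and this is where I expect the real difficulty to lie. For the second (upper-tail) inequality it is free: Jensen's inequality gives $\mathbb{E}\sqrt{V_t} \le \sqrt{\mathbb{E}V_t}$, so the event $\{\sqrt{V_t} > \sqrt{\mathbb{E}V_t} + \epsilon\}$ is contained in $\{\sqrt{V_t} > \mathbb{E}\sqrt{V_t} + \epsilon\}$ and the concentration bound applies verbatim. For the first (lower-tail) inequality the Jensen gap $\sqrt{\mathbb{E}V_t} - \mathbb{E}\sqrt{V_t}$ works against us and must be absorbed. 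The clean way to do this --- and the crux of the argument --- is to run the entropy method so that the resulting deviation is controlled directly in terms of $\mathbb{E}V_t$ via the self-bounding structure of the empirical variance, rather than in terms of $\mathbb{E}\sqrt{V_t}$; this is what yields a clean one-sided bound around $\sqrt{\mathbb{E}V_t}$ with the exact constant $2$ and no residual dependence on the unknown population variance. Establishing this self-bounding / variance-control step rigorously, and thereby eliminating the Jensen gap on the lower tail, is the main obstacle; the Lipschitz reduction and the rescaling are routine by comparison.
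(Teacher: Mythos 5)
The paper itself offers no proof of this statement: it is quoted directly from Maurer and Pontil \cite{maurer2009empirical} (their concentration theorem for the sample variance), so your attempt can only be compared against that source. Your reduction is correct as far as it goes. The identity $(t-1)V_t(\mathbf{X})=\sum_i(X_i-\bar X)^2=\lVert P\mathbf{X}\rVert_2^2$ is right, $g=\lVert P\mathbf{X}\rVert_2$ is indeed convex and $1$-Lipschitz, and the second (upper-tail) inequality does follow from your plan: the entropy-method bound for separately convex $1$-Lipschitz functions on $[0,1]^t$ gives $\Pr\{g>\mathbb{E}g+s\}\le e^{-s^2/2}$, Jensen gives $\mathbb{E}g\le\sqrt{\mathbb{E}g^2}=\sqrt{(t-1)\mathbb{E}V_t}$, and rescaling by $\sqrt{t-1}$ with $s=\sqrt{2\log(1/\delta)}$ produces exactly the stated constant. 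One caution: Talagrand's convex-distance inequality alone will not do, since it concentrates around the median and yields exponent $s^2/4$ with an extra factor of $4$; you need the one-sided entropy-method version, which is genuinely one-sided and therefore contributes nothing to the other tail.

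The gap is the first (lower-tail) inequality, which you correctly identify as the crux but then only announce. The step you defer --- controlling the downward deviation directly in terms of $\mathbb{E}V_t$ via self-boundedness --- is the entire content of the theorem, and it is not a corollary of the Lipschitz/convexity structure you set up. Concretely, what must be verified is that $Z:=(t-1)V_t$ satisfies a (weak) self-bounding condition, e.g.\ $0\le Z-Z_k\le 1$ and $\sum_{k=1}^{t}(Z-Z_k)^2\le Z$, where $Z_k$ is the analogous statistic computed with $X_k$ removed; this is a short but delicate computation (one finds $Z-Z_k=\tfrac{t}{t-1}(X_k-\bar X)^2$, and the bounds use $X_i\in[0,1]$ in an essential way, with the factors of $\tfrac{t}{t-1}$ and $\tfrac{t-1}{t}$ cancelling exactly). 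The entropy method then gives $\Pr\{Z\le\mathbb{E}Z-s\}\le\exp\left(-s^2/(2\mathbb{E}Z)\right)$, and the elementary inequality $\sqrt{a}-\sqrt{b}\le(a-b)/\sqrt{a}$ for $a\ge b\ge0$ converts this, with $s=\sqrt{2\mathbb{E}Z\log(1/\delta)}$, into the stated bound on $\sqrt{\mathbb{E}V_t}-\sqrt{V_t}$ with constant $2$ and no Jensen gap. This is precisely the route of Maurer and Pontil; your proposal points at it accurately but stops exactly where the proof begins, so as written it is a plan rather than a proof.
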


\section{Details in proof of Theorem 2} \label{proof_qm1_ub}
Under $\mathcal{E}_1^c \cap \mathcal{E}_2^c$ the confidence intervals around $\hat{p}_i^t$ and $V_t(Z^i)$ hold true for all pairs $(i,t)$, and we derive the value of $T^*$ which would ensure that the confidence intervals of the bins are separated enough to ensure that $1$ is returned as the mode.\\
For each bin $i \neq 1$ created during the run of the algorithm, its UCB will definitely be less than the LCB of the first bin when $\beta_i^t < \frac{p_1-p_i}{4}$ and $\beta_1^t < \frac{p_1-p_i}{4}$. This happens because 
\begin{align}\label{stopp}
&\beta_i^t < \frac{p_1-p_i}{4} \text{ and } \beta_1^t < \frac{p_1-p_i}{4} \nonumber\\
&\implies \beta_i^t + \beta_1^t < \frac{p_1-p_i}{2}\\
&\implies \hat{p}_i^t + \beta_i^t < \hat{p}_1^t - \beta_1^t \nonumber
\end{align} 
We find a time $t_i$ which satisfies the stopping condition for each bin $i$ created, and taking the maximum over these gives $T^*$.
For the $i^{th}$ bin we have, 
\begin{align*}
&\beta_i^t < \frac{p_1-p_i}{4} \\
\implies &\sqrt{\frac{2V_t(Z^i)\log (4kt^2/\delta)}{t}} + \frac{7 \log (4kt^2/\delta)}{3(t-1)} < \frac{p_1-p_i}{4}\\
& \text{Using the bound around the empirical variance and }\\
& \text{again choosing $\delta_1=\frac{\delta}{2kt^2}$ we get,}\\
\implies &\sqrt{\frac{2\log (4kt^2/\delta)}{t}} \left [ \sqrt{p_i(1-p_i)} + \sqrt{\frac{2\log (4kt^2/\delta)}{t-1}} \right ] \\
&\>\>\>\>\>\>\>\>\>\>\>\>\>\>\>\>\>\>\>\>\>\>\>\> + \frac{7 \log (4kt^2/\delta)}{3(t-1)} < \frac{p_1-p_i}{4} \\
\implies & \frac{13\log (4kt^2/\delta)}{3t} + \sqrt{p_i(1-p_i)} \sqrt{\frac{2\log (4kt^2/\delta)}{t}} \\
&\>\>\>\>\>\>\>\>\>\>\>\>\>\>\>\>\>\>\>\>\>\>\>\>\>\>\>\>\>\>\>\>\>\>\>\>\>\>\> < \frac{p_1-p_i}{4}\\
&\text{Let } \alpha = \frac{t}{2\log (4kt^2/\delta)} \\
\implies & \frac{p_1-p_i}{4} \alpha - \sqrt{p_i(1-p_i)} \sqrt{\alpha} -\frac{13}{6} >0 \\
\end{align*}
Solving the above quadratic, we get that for the following values of $\alpha$, the above inequality holds.
\begin{align*}
\sqrt{\alpha} &> \frac{\sqrt{p_i(1-p_i)} + \sqrt{p_i(1-p_i)+\frac{13}{6}(p_1-p_i)} }{(p_1-p_i)/2}.
\end{align*}
We need to choose a value of $\alpha$, such that the above holds. We choose $\alpha$ as follows:
\begin{align*}
\alpha 
&=\frac{74}{3} \frac{p_1}{(p_1-p_i)^2}.
\end{align*}
Hence, we have
\begin{equation*}
\frac{t}{\log \left(2\sqrt{\frac{2k}{\delta}}t \right)} > \frac{296}{3} \frac{p_1}{(p_1-p_i)^2}.
\end{equation*}
The following value for $t^*$, is sufficient such that the above inequality is satisfied when we have at most $t^*$ samples.
\begin{equation*}
t^* = \frac{592}{3} \frac{p_1}{(p_1-p_i)^2} \log \left( \frac{592}{3} \sqrt{\frac{k}{\delta}} \frac{p_1}{(p_1-p_i)^2} \right)
\end{equation*} 
The above value of $t^*$ for bin $i$, is such that for some $t\leq t^*$ the confidence intervals of the $i^{th}$ and first bin are sufficiently separated. Similar calculations can be done for ensuring $\beta_1^t < \frac{p_1-p_i}{2}$. Taking the maximum value, we get $T^*$ as follows, which ensures that under the event $\mathcal{E}_1^c \cap \mathcal{E}_2^c$, the UCB of all bins are less than the LCB of the first bin.
\begin{equation*}
T^*= \frac{592}{3} \frac{p_1}{(p_1-p_2)^2} \log \left( \frac{592}{3} \sqrt{\frac{k}{\delta}} \frac{p_1}{(p_1-p_2)^2} \right)
\end{equation*}

%
%
%
%
%
%
%
%

\section{QM2 Lower Bound altered setting} \label{proof_multi_armed_lb}

The slightly altered setting we study is as follows. We have $k$ bins and a representative element from each of the $k$ bins. Any algorithm proceeds in rounds. In round $t$, the index $t$ is compared with a representative index from each of the bins belonging to $\mathcal{C}(t)$, where $\mathcal{C}(t)$ is any chosen subset of the $k$ bins. Thus, number of queries in round $t$ is the size of $\mathcal{C}(t)$. The oracle response in any round could be a $+1$ for bin $j \in \mathcal{C}(t)$ and a $-1$ for the other bins with probability $p_j$ or it could be a $-1$ for all bins in $\mathcal{C}(t)$ with probability $1 - \sum_{j \in \mathcal{C}(t)} p_j$. Based on the oracle responses, the algorithm either decides to proceed to the next round or stops and outputs an estimate for the mode of the underlying distribution. \\
This setting is slightly different from our original problem setup in the QM2 query model. Essentially, here the support size $k$ is known and we have access to one sample from each bin apriori. Secondly, in a round, in the original setting, we would stop as soon as the oracle gave a positive response. However, here the set of representative samples from a subset of bins $\mathcal{C}(t)$ to be compared against is decided at the beginning of a round $t$ and all the $|\mathcal{C}(t)|$ queries are made in parallel.\\
The above setup can be alternatively viewed as a \textit{structured} MAB setting where there are $k$ arms and each arm $i$ is associated with a reward distribution which is a Bernoulli distribution with mean $p_i$. Note that the means of arms sum up to $1$ since $\sum_{i=1}^k p_i = 1$. The goal is to identify the best arm, i.e. the one with the highest mean value, $p_i$. Any algorithm proceeds in rounds and in round $t$, a subset of arms, $\mathcal{C}(t)$, are pulled. The output is a vector which has $+1$ for arm $j \in \mathcal{C}(t)$ and $-1$ for everyone else, with probability $p_j$ or the output is a vector with $-1$ for all arms in $\mathcal{C}(t)$ with probability $1 - \sum_{j \in \mathcal{C}(t)} p_j$. Based on these responses, the algorithm either decides to proceed to the next round or stops and outputs an estimate for the best arm. The number of arms that need to be pulled to guarantee that the estimate is correct with probability at least $(1-\delta)$ will correspond to the query complexity of a $\delta$-true mode estimator in the above described setting. \\
While we are not able to provide a lower bound for the structured MAB setting above, we present a lower bound in the following relaxed setting. Instead of requiring that the expected rewards for individual arms lie on a simplex, i.e.,
\begin{equation}
\label{eqn:simplex}
p_1+p_2+.....p_k = 1,
\end{equation}
we prove our lower bound for a slightly relaxed setting where the means of the arms satisfy : 
\begin{equation}
\label{condition} 
2p_1+p_2+.....p_k < 1.
\end{equation}
\begin{theorem} \label{multi_armed_lb}
For a MAB setting in which the expected rewards for individual arms satisfy the condition given in \eqref{condition}, any algorithm which correctly identifies the best arm with probability at least $(1-\delta)$, has the following lower bound on the total number of arm pulls, $N$:
\begin{align*}
\mathbb{E}_{\mathcal{P}}[N] \geq \Bigg [ \sum_{i=2}^k \frac{p_i}{2(p_1-p_i)^2} \Bigg ] \log (1/2.4\delta)
\end{align*}
\end{theorem}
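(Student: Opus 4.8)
The plan is to adapt the change-of-measure argument of Theorem~\ref{lem_qm1_lb} to the multi-armed setting, exploiting the extra structure that an alternative instance can be built by perturbing a \emph{single} arm. The essential new ingredient is to track, for each arm $i$, the expected number of rounds $N_i := \#\{t \le \tau : i \in \mathcal{C}(t)\}$ in which arm $i$ is pulled, so that the total number of pulls decomposes as $N = \sum_{i=1}^k N_i$. I would then establish a separate lower bound on each $\mathbb{E}_{\mathcal{P}}[N_i]$, $i \ge 2$, and sum them, rather than taking a maximum as in the single-query lower bounds.

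First I would fix $i \ge 2$ and introduce the alternative instance $\mathcal{P}^{(i)}$ that agrees with $\mathcal{P}$ except that arm $i$'s mean is raised from $p_i$ to $p_i' = p_1 + \eta$ for arbitrarily small $\eta > 0$, so that $i$ is the unique best arm under $\mathcal{P}^{(i)}$; the relaxation \eqref{condition} guarantees $\mathcal{P}^{(i)}$ is legitimate (the residual ``none'' mass stays positive). Since in round $t$ the observation is a single categorical outcome coarsened to $\mathcal{C}(t)$, the log-likelihood ratio between $\mathbb{P}_{\mathcal{P}}$ and $\mathbb{P}_{\mathcal{P}^{(i)}}$ telescopes over rounds, and by a Wald/optional-stopping argument its $\mathcal{P}$-expectation equals $\mathbb{E}_{\mathcal{P}}\!\big[\sum_{t \le \tau} \mathrm{KL}_{\mathcal{C}(t)}\big]$, where $\mathrm{KL}_S$ is the KL divergence between the round-$S$ outcome laws. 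Because only arm $i$ is perturbed, $\mathrm{KL}_S = 0$ whenever $i \notin S$, so the sum collapses to at most $B_i\,\mathbb{E}_{\mathcal{P}}[N_i]$ once a uniform bound $\mathrm{KL}_S \le B_i$ over all feasible $S \ni i$ is in hand. Pairing this with the data-processing inequality and the error constraints (exactly as in Theorem~\ref{lem_qm1_lb}, via \cite[Remark 2]{kaufmann2016complexity}) gives $B_i\,\mathbb{E}_{\mathcal{P}}[N_i] \ge \log(1/2.4\delta)$.

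The crux is the uniform bound $B_i$. For a set $S \ni i$ the two outcome laws differ only in the cell $i$ and the ``none'' cell, whose combined mass $c := 1 - \sum_{j \in S,\, j \ne i} p_j$ is the same under both instances; hence $\mathrm{KL}_S$ reduces to the binary divergence $c\, d\!\left(\tfrac{p_i}{c}\,\big\|\,\tfrac{p_i'}{c}\right)$. Bounding this binary KL by the quadratic estimate of \cite[Theorem 1.4]{popescu2016bounds} (the same tool used for \eqref{dpp_lb}) yields $\mathrm{KL}_S \le \frac{c\,(p_i-p_i')^2}{p_i'(c-p_i')}$, which as $\eta \to 0$ becomes $\frac{c\,(p_1-p_i)^2}{p_1(c-p_1)}$. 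Here the relaxation \eqref{condition} does the real work: it forces $c \ge 1 - \sum_{j\ne i} p_j > p_1 + p_i$ uniformly over $S$, so that $\tfrac{c}{c-p_1} = 1 + \tfrac{p_1}{c-p_1} < \tfrac{p_1+p_i}{p_i} \le \tfrac{2p_1}{p_i}$, giving $\mathrm{KL}_S \le \frac{2(p_1-p_i)^2}{p_i} =: B_i$, independent of $S$.

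Combining the last two paragraphs yields $\mathbb{E}_{\mathcal{P}}[N_i] \ge \frac{p_i}{2(p_1-p_i)^2}\log(1/2.4\delta)$ for each $i \ge 2$, and summing $\mathbb{E}_{\mathcal{P}}[N] = \sum_{i=1}^k \mathbb{E}_{\mathcal{P}}[N_i] \ge \sum_{i=2}^k \mathbb{E}_{\mathcal{P}}[N_i]$ gives the claimed bound. I expect the main obstacle to be precisely the uniform control of $\mathrm{KL}_S$ over every subset the algorithm might query: one must verify that the worst case, namely the smallest admissible $c$ (equivalently the largest queried set), is still tamed, and it is exactly this step where the relaxation \eqref{condition} is indispensable.
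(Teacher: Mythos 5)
Your proposal is correct and follows essentially the same route as the paper's proof: a single-arm perturbation $p_i \mapsto p_1+\eta$, Wald's lemma applied to the log-likelihood ratio so that only rounds with $i \in \mathcal{C}(t)$ contribute, a subset-uniform KL bound of $\tfrac{2(p_1-p_i)^2}{p_i}$ obtained via the quadratic estimate of \cite[Theorem 1.4]{popescu2016bounds} together with condition \eqref{condition}, and a final summation of the per-arm bounds. The only (cosmetic) difference is that you bound $\mathrm{KL}_S$ directly as $c\,d(p_i/c\,\|\,p_i'/c)$ with $c > p_1+p_i$, whereas the paper first argues the maximum over $S$ is attained at the full set and then applies a monotonicity inequality; both yield the same constant.
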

\begin{proof}
Our proof follows along similar lines as that for \cite[Lemma1]{kaufmann2016complexity}. Consider any estimator $\mathcal{A}$ which can correctly identify the best arm with probability of error at most $\delta$. Let $\mathcal{P}$ and $\mathcal{P}'$ be two bandit models with $k$ arms and means for the reward distributions as follows:
\begin{align*}
\mathcal{P} = (p_1,p_2,p_3,.....p_k) \\
\mathcal{P}' = (p_1,p_1+\epsilon,p_3,.....p_k)
\end{align*} 
for some $\epsilon>0$. Note that the best arms in the two models are arms $1$ and $2$ respectively.

Recall that we can choose to pull any subset $S$ of arms in each round, where $S$ is one of $2^k$ possibilities. On pulling a subset $S$, the possible output is a vector with +1 for some arm $j \in S$ and $-1$ for everyone else with probability $p_j$ or the output is a $-1$ for all arms in $S$ with probability $1 - \sum_{j\in S} p_j$. Let $(Y_{S_a,s})$ be the sequence of i.i.d. vectors observed on pulling the $a^{th}$ subset $S_a$. Based on the observations made till round $t$, the likelihood ratio $L_t$ can be written as
\begin{align*}
L_t = \sum_{a=1}^{2^k} \sum_{s=1}^{N_{S_a}(t)} \log \left ( \frac{f_{S_a}(Y_{S_a,s})}{f'_{S_a}(Y_{S_a,s})} \right)
\end{align*}
where $N_{S_a}(t)$ is the number of times the subset $S_a$ is pulled upto time $t$. With some abuse of notation, we will let $N_{i}(t)$ denote the total number of times arm $i$ was pulled (as part of any subset) upto time $t$.\\
Also, 
\begin{align*}
\mathbb{E}_{\mathcal{P}} \left [ \log \left ( \frac{f_{S_a}(Y_{S_a,s})}{f'_{S_a}(Y_{S_a,s})} \right) \right ] = D(p_{S_a},p'_{S_a})
\end{align*}
Applying Wald's lemma to $L_{\sigma}$, where $\sigma$ is the stopping time associated with the estimator $\mathcal{A}$, gives
\begin{align}
\nonumber
\mathbb{E}_{\mathcal{P}}[L_{\sigma}] = & \sum_{a=1}^{2^k} \mathbb{E}_{\mathcal{P}}[N_{S_a}(\sigma)]D(p_{S_a},p'_{S_a}) \\
&\overset{(a)}\leq \mathbb{E}_{\mathcal{P}}[N_2(\sigma)] \max_{S_a:2 \in S_a} (D(p_{S_a},p'_{S_a}))
\label{eqn:llrub}
\end{align}
where $(a)$ follows because $D(p_{S_a},p'_{S_a})$ is non-zero for only those subsets which contain the arm $2$.\\
Let $S_a$ be such a subset containing arm $2$, and other arms such that the sum of means of other arms in $S_a$ is $s$.  We have 
\begin{align*}
D(p_{S_a},p'_{S_a}) &= p_2 \log \left( \frac{p_2}{p_1+\epsilon} \right)\\ &+ (1-(p_2+s)) \log \left ( \frac{1-(p_2+s)}{1-(p_1+\epsilon+s)} \right )
\end{align*}
The second term in the above is an increasing term with $s$ and so $D(p_{S_a},p'_{S_a})$ takes the maximum value when $s=p_1+p_3+....p_k$. Hence
\begin{align*}
&\max_{S_a:2 \in S_a} (D(p_{S_a},p'_{S_a}))\\
&= p_2 \log \left( \frac{p_2}{p_1+\epsilon} \right) \\
& \hspace{0.3in}+ (1-\sum p_i) \log \left ( \frac{(1-\sum p_i)}{(1-\sum p_i)+p_2 - p_1 - \epsilon } \right ) \\
& \overset{(a)} \leq p_2 \log \left( \frac{p_2}{p_1+\epsilon} \right) + p_1 \log \left( \frac{p_1}{p_2-\epsilon} \right) \\
& \leq (p_1 + p_2) \Bigg[ \frac{p_2}{p_1 + p_2} \log \left( \frac{\frac{p_2}{p_1 + p_2}}{\frac{p_1+\epsilon}{p_1 + p_2}} \right) \\
& \hspace{1.4in} + \frac{p_1}{p_1 + p_2} \log \left( \frac{\frac{p_1}{p_1 + p_2}}{\frac{p_2-\epsilon}{p_1 + p_2}} \right) \Bigg ] \\
& = (p_1 + p_2) D \left( \frac{p_2}{p_1 + p_2} || \frac{p_1+\epsilon}{p_1 + p_2} \right)\\
& \overset{(b)} \leq (p_1 + p_2) \left [ \frac{(p_1-p_2+\epsilon)^2}{(p_1+\epsilon)\cdot (p_2+\epsilon)} \right ] \\
& \overset{(c)} \approx (p_1 + p_2)\frac{(p_1-p_2)^2}{p_1\cdot p_2} \leq \frac{2(p_1-p_2)^2}{p_2}
\end{align*}
where $(a)$ follows because $x\log\left( \frac{x}{x+\gamma} \right)$ is decreasing with $x$ and we also have $\left( 1-\sum_{i=1}^k p_i \right) > p_1$, $(b)$ follows from \cite[Theorem 1.4]{popescu2016bounds} and $(c)$ follows since we can choose an arbitarily small $\epsilon$. Thus, from \eqref{eqn:llrub}, we have 
$$
\mathbb{E}_{\mathcal{P}}[L_{\sigma}] \le \mathbb{E}_{\mathcal{P}}[N_2(\sigma)] \cdot \frac{2(p_1-p_2)^2}{p_2} .
$$
On the other hand, using \cite[Lemma 19]{kaufmann2016complexity}, we also have that for estimator $\mathcal{A}$ which can correctly identify the top arm with probability of error at most $\delta$,
$$
\mathbb{E}_{\mathcal{P}}[L_{\sigma}] \ge \log (1/2.4\delta).
$$
Combining the two inequalities, we get the following lower bound on the number of times the second arm is pulled: 
\begin{align*}
\mathbb{E}_{\mathcal{P}}[N_2(\sigma)] \geq \frac{p_2}{2(p_1-p_2)^2} \log (1/2.4\delta).
\end{align*}
To get a lower bound on the total number of arm pulls, $N$, we can repeat the above argument for each arm $i \neq 1$, which gives the following lower bound:
\begin{align*}
\mathbb{E}_{\mathcal{P}}[N] \geq \sum_{i=2}^{k} \mathbb{E}_{\mathcal{P}}[N_i(\sigma)] \ge \sum_{i=2}^k \frac{p_i}{2(p_1-p_i)^2} \log (1/2.4\delta).
\end{align*}
\end{proof}

Note that the expression in the lower bound above is very similar to that in the upper bound from Theorem 5. Proving such a lower bound for the structured MAB problem with the true simplex constraint in \eqref{eqn:simplex} remains part of our future work. There has been some encouraging recent work which can shed light on this problem. In particular, say we restrict attention to schemes which pull one arm in each round, i.e. $|\mathcal{C}(t)| = 1, \forall t$. In this case, a lower bound on the total number of arm pulls under the constraint  $p_1+p_2+.....p_k = 1$ follows from \cite{simchowitz2017simulator} and the expression indeed matches that in the upper bound of Theorem 5.

\end{document}